\documentclass{article}

\PassOptionsToPackage{numbers}{natbib}
\usepackage[final]{neurips_2024}

\usepackage[utf8]{inputenc} 
\usepackage[T1]{fontenc}    
\usepackage{hyperref}       
\usepackage{url}            
\usepackage{booktabs}       
\usepackage{amsfonts}       
\usepackage{nicefrac}       
\usepackage{microtype}      
\usepackage{xcolor}         

\usepackage[english]{babel}
\usepackage{mathtools}
\usepackage{tikz}
\usepackage{babel}
\usepackage{amsmath}
\usepackage{amsthm}
\usepackage{amssymb}
\usepackage{bbm}
\usepackage{units}
\usepackage{esint}
\RequirePackage{algorithm}
\RequirePackage{algorithmic}
\RequirePackage{forloop}
\RequirePackage{url}
\usepackage{multirow, makecell}
\usepackage{enumitem}
\usepackage{wrapfig}
\usepackage{float}
\usepackage{afterpage}

\newcommand\inlineeqno{\stepcounter{equation}\ (\theequation)}

\title{Class Distribution Shifts in Zero-Shot Learning:\\ Learning Robust Representations}

\author{%
  Yuli Slavutsky \\
  Department of Statistics and Data Science\\
  The Hebrew University of Jerusalem\\
  Jerusalem, Israel \\
  \texttt{yuli.slavutsky@mail.huji.ac.il} \\
  \And
  Yuval Benjamini \\
  Department of Statistics and Data Science \\
  The Hebrew University of Jerusalem\\
  Jerusalem, Israel \\
  \texttt{yuval.benjamini@mail.huji.ac.il} \\
}

\newtheorem{proposition}{Proposition}
\newtheorem{lemma}{Lemma}

\DeclareMathOperator{\tr}{tr}
\DeclareMathOperator{\expect}{\mathbb{E}}

\begin{document}

\maketitle

\begin{abstract}
  Zero-shot learning methods typically assume that the new, unseen classes encountered during deployment come from the same distribution as the the classes in the training set. However, real-world scenarios often involve class distribution shifts (e.g., in age or gender for person identification), posing challenges for zero-shot classifiers that rely on learned representations from training classes. In this work, we propose and analyze a model that assumes that the attribute responsible for the shift is unknown in advance. We show that in this setting, standard training may lead to non-robust representations. To mitigate this, we develop an algorithm for learning robust representations in which (a) synthetic data environments are constructed via hierarchical sampling, and (b) environment balancing penalization, inspired by out-of-distribution problems, is applied. We show that our algorithm improves generalization to diverse class distributions in both simulations and experiments on real-world datasets.
\end{abstract}

\section{Introduction} \label{introduction}
Zero-shot learning systems \citep{few_shot, zero_shot} are designed to classify instances of new, previously unseen classes at deployment, a scenario known as \emph{open-world} classification. These systems are widely applied in extreme multi-class applications, such as face or voice recognition \citep{re-identification} for matching observations of the same individual, and more generally, for learning data representations \citep{openface}.

Class distribution shifts typically refer to changes in the prevalence of a fixed set of classes between training and testing. In zero-shot learning, however, a different challenge arises: the appearance of entirely new classes at test time. This raises a critical question -- are these new classes drawn from the same distribution as the training classes? Most zero-shot methods assume that they are, an assumption that not only shapes the design of test sets \citep{xian2019semantic, gu2020context} but also plays an explicit role in assessing the generalization capabilities of zero-shot classifiers \citep{zheng2018extrapolating, cleanex}.

In practice, training classes are often chosen based on convenience and accessibility during data collection.
Even when data is carefully collected, the distribution of classes may shift over time, leading to a different distribution.
For instance, this could occur when a face recognition system is deployed in a building located in a neighborhood undergoing demographic changes.

Class distribution shifts pose significant challenges to zero-shot classifiers, since they rely on learning data representations from the training classes to distinguish new, unseen ones. 
 Typically, these classifiers are trained by minimizing the loss on the training set to effectively separate the training classes.
However, this approach may result in poor performance when confronted with data from distributions that differ significantly from the class distribution in the training data. Notably, in person re-identification, this concern gained
 attention from a fairness perspective with respect to gender \citep{grother2019ongoing, klare2012face}, age \citep{best2017longitudinal, michalski2018impact,  srinivas2019face}, and racial \citep{raji2019actionable, wang2019racial} bias.
In all these studies the variable (i.e., gender, age, race) expected to cause the distribution shift was known in advance.  

In contrast, in real-world scenarios,  
the attribute responsible for a future distribution shift is usually unknown during training.
In such cases, existing approaches based on collecting balanced datasets or re-weighting training examples \citep{wang2019racial, robinson2020face, wang2020mitigating} are inapplicable. Furthermore, while class distribution shifts have been extensively studied in the standard setting of supervised learning (see Appendix \ref{sec:related_work}), previous research assumed a \emph{closed-world} setting that does not account for new classes at test time.  Instead, it only addressed changes in the prevalence of fixed classes between training and testing.
Consequently, class distribution shifts in zero-shot learning remain  largely unaddressed.

In this paper we first address these limitations by examining the effects of class distribution shifts on contrastive zero-shot learning, by proposing and analyzing a parametric model (\S \ref{sec:parametric}). We identify conditions where minimizing loss in this model leads to representations that perform poorly when a distribution shift has occurred.

We then use the insights gained from this model to present our second contribution (\S \ref{sec:proposed_approach}):
an algorithm for learning representations that are robust against class distribution shifts in zero-shot classification.
In our proposed approach, \emph{artificial data environments} with diverse attribute distributions are constructed using hierarchical subsampling, and an \emph{environment balancing} criterion inspired by out-of-distribution (OOD) methods is applied.  
We assess our method's effectiveness in both simulations and experiments on real-world datasets, demonstrating its enhanced robustness in \S \ref{results}.

\subsection{Problem Setup} \label{problem}
Let $\{z_i, c_i\}_{i=1}^{N_z}$ be a labeled set of training data points $z\in \mathcal{Z}$ and classes $c \in \mathcal{C}$, such that $c_i$ is the class of $z_i$.

In this work, we focus on verification algorithms that enable \emph{open-world} classification by determining whether two data points
$x_{ij}\coloneqq(z_i, z_j)$ belong to the same class.
For instance, in person re-identification the task is to identify whether two data points (e.g., face images or voice recordings) belong to the same person. We denote this by $y_{ij}$, where $y_{ij}=1$ if $c_i=c_j$ and $y_{ij}=0$ otherwise. 
When the identity of each data point in the pair is not important, a single index is used for simplicity, namely $(x_k, y_k)$.

We assume that each class $c$ is characterized by some attribute $A$.
We further assume that the training classes are sampled from $P_C(c)$, the test classes are sampled according to 
$Q_C(c)$, and the two distributions differ solely due to a shift in the distribution of an attribute $A$:
\begin{equation} \label{eq:P_Q}
    P_{C}(c)=\intop P_{C\vert A}(c\vert a)\boldsymbol{P}_{A}(a)\:da,\quad Q_{C}(c)=\intop P_{C\vert A}(c\vert a)\boldsymbol{Q}_{A}(a)\:da.
\end{equation}
Importantly, we assume that the attribute $A$ is unknown, and that
both during training and testing, data points $z \in \mathcal{Z}$ for each class are sampled according to $P_{Z \vert C}(z\vert c)$.
For instance, revisit the person identification example where each person is a class. If the 
attribute $A$ is binary (e.g.,  $a_1$ is blond and $a_2$ is dark-haired), then $P(C\vert A=a_1)$ represents the distribution of people with blond hair, 
and $P(C\vert A=a_2)$ of individuals with other hair colors. The training classes might be predominantly sampled from the blond population $P(A=a_1) = \rho_{\text{tr}} =0.8$, while test classes are predominantly sampled from   $Q(A=a_1) = \rho_{\text{te}}=0.1$.

We focus on verification techniques based on \emph{deep metric learning} methods  (for surveys see \citep{metric1, metric2}) such as contrastive-learning \citep{contrastive}, Siamese neural networks \citep{siamese_networks},  triplet networks \citep{triplet_networks}, and other more recent variations \citep{oh2016deep, sohn2016improved, wu2017sampling, yuan2019signal}. 
These methods learn a representation function that maps data points to a representation space $g: \mathcal{Z}\rightarrow \hat{\mathcal{Z}}$, so that examples from the same class are close (in a predefined distance function $d(\cdot,\cdot)$), while those from different classes are farther apart.

We assume that $g$ is a neural network trained by optimizing a deep-metric-learning loss, such as the contrastive loss \citep{contrastive}:
\begin{align} \label{contrastive}
\ell\left(z_{i},z_{j},y_{ij};d_{g}\right)\coloneqq & y_{ij}d_{g}^{2}\left(z_{i},z_{j}\right)
  +\left(1-y_{ij}\right)\max\left\{ 0,m-d_{g}\left(z_{i},z_{j}\right)\right\} ^{2}
\end{align}
where $m\geq0$ is a predefined margin, and $d_g(z_i, z_j) \coloneqq d(g(z_i), g(z_j))$ is the distance between the representations of the datapoints $z_i, z_j$.
In our theoretical analysis, we examine the no-hinge contrastive loss (see Appendix \ref{sec:parametric_sup} for additional details): 
\begin{align} \label{contrastive-no-hinge}
\widetilde{\ell}\left(z_{i},z_{j},y_{ij};d_{g}\right):=y_{ij}d_{g}^{2}\left(z_{i},z_{j}\right)+\left(1-y_{ij}\right)\left(m-d_{g}\left(z_{i},z_{j}\right)\right)^{2}.
\end{align}

To evaluate the class separation capability of a representation, we treat  the distances between representations, $d_g(z_i, z_j)$, as classification scores. Following common practice in the field (e.g., \cite{sixta2020fairface, joshi2022fair}), we use the area under the receiver operating characteristic curve (AUC) to evaluate the representation, enabling threshold-agnostic assessment: 
\begin{equation}
    AUC(g) \coloneqq P(d_g(z_i, z_j) < d_g(z_u, z_v) \vert y_{ij}=1,  y_{uv}=0).
\end{equation}
Our goal 
is to learn a representation $g$ that is robust to class attribute shifts. That is, such that 
for an unknown shifted distribution $Q_A$, the performance $\mathbb{E}_{Q_{A}}\left[\text{AUC}(g)\right]$ does not significantly deteriorate compared to the performance obtained on the training distribution $P_A$.

\section{Background on Environment Balancing Methods in OOD Generalization} \label{sec:related_work_OOD}

The field of OOD generalization 
gained attention since the work of \citet{peters2016causal}, \citep{peters2017elements}, which deals with closed-world classification where training data is gathered from multiple environments $E_\text{train}$. In this setting it is assumed that in each environment $e \in E_\text{train}$ examples share the same joint distribution $P^e_{C,Z}(c,z)$, but across environments the joint distribution changes, often due to variations in $P^e_{Z\vert C}(z \vert c)$. A well-known example \citep{arjovsky2019invariant} involving the classification of images of cows and camels demonstrates how an algorithm relying on background cues during training (e.g., cows in green pastures, camels in deserts) performs poorly on new images of cows with sandy backgrounds.

Several approaches that rely on access to diverse training environments were proposed to identify stable relations between the data point $z$ and its class $c$. Examples of such stable relations include choosing causal variables using statistical tests
\citep{rojas2018invariant}, leveraging conditional independence induced by the common causal mechanism
\citep{chang2020invariant}, and using multi-environment calibration as a surrogate for OOD performance \citep{wald2021calibration}.

Most relevant to our work are methods that 
aim to balance the loss over multiple environments. 
These methods consider a representation $g=g_\theta$ that is a neural network parameterized by $\theta$ trained to optimize an objective of the form 
\begin{equation} \label{eq:ood_general}
    \min_\theta \quad \smashoperator{\sum_{e \in E_\text{train}}} \ell^e(g_\theta) + \lambda R(g_\theta, E_\text{train})
\end{equation}
where $\ell^e(g_\theta)$ is the empirical loss obtained on the environment
$e$, $E_\text{train}$ is the set of all training environments, $R$ is a regularization term designed to balance performance over multiple environments, and  $\lambda$ is a regularization factor balancing the tradeoff between
the empirical risk minimization (ERM) term and the balance penalty.
Below, we describe three such methods, which we will refer to later in the paper.  

\paragraph{Invariant risk minimization (IRM)} \hspace{-1em} 
presented in \citep{arjovsky2019invariant}, aims to find data representations $g_\theta$ such that the optimal classifier $w$ on top of the data representation $w \circ g_\theta$ is shared across all environments. Therefore, the authors proposed minimizing the sum of environment losses $\ell^{e}(w\circ g_\theta)$ over all training environments such that $w\in\arg\min_{w'}\ell^{e}(w'\circ g_\theta)$ for all $e\in E_{\text{train}}$.
However, since this objective is too difficult to optimize, a relaxed version was also proposed, taking the form of Equation \ref{eq:ood_general} with a penalty that measures how close $w$ is to minimizing $\ell^{e}(w\circ g_\theta)$: 
$R^e_\text{IRMv1}(g_\theta) = \left \Vert \nabla_{w\vert w=1}\ell^{e}\left(w\cdot g_\theta\right)\right\Vert ^{2}$.

Note that for loss functions for which optimal classifiers can be expressed as conditional expectations, the original IRM objective is equivalent 
to the requirement that for all environments $e, e' \in E_\text{train}$, $\mathbb{E}_{P_{C,Z}^e}\left[c\vert g(z)=h\right]=\mathbb{E}_{P_{C,Z}^{e'}}\left[c\vert g(z)=h\right], $
where $P_{C,Z}^e$ and $P_{C,Z}^{e'}$ are the joint data distributions in the respective environments.

\paragraph{Calibration Loss Over Environments (CLOvE)}  \hspace{-1em}  presented in \citep{wald2021calibration}, leverages the equivalence above to establish a link between multi-environment calibration and invariance for binary predictors ($c \in \{0,1\}$). The proposed regularizer is based on the \emph{maximum mean calibration error} (MMCE) \citep{kumar18a}. Let $s:  \hat{\mathcal{Z}} \rightarrow [0,1]$ be 
a classification score function applied on the representation $s \circ g$, and $s_i = \max \{s \circ g(z_i), 1-s \circ g(z_i) \}$ be the \emph{confidence} on the $i$-th data point. Denote the  \emph{correctness} as $b_{i}=\mathbbm{1}\{\left|c_{i}-s_{i}\right|<\frac{1}{2}\}$, and let $K: \mathbb{R} \times \mathbb{R} \rightarrow \mathbb{R} $ be a universal kernel. Let $Z^{e}$ denote the training data in the environment $e$.
The authors proposed using the MMCE as a penalty in an objective that takes the form of Equation \ref{eq:ood_general} with $R_{\text{MMCE}}^{e}\left(s,g_{\theta}\right)=
  \frac{1}{m^{2}}\!\!\sum_{z_{i},z_{j}\in Z^{e}}\!\!\left(b_{i}-s_{i}\right)\left(b_{j}-s_{j}\right)K\left(s_{i},s_{j}\right). $

\paragraph{Variance Risk Extrapolation (VarREx)}  \hspace{-1em}  proposed by \citet{krueger2021out}, is based on the observation that reducing differences in loss (risk) across training domains can reduce a model’s sensitivity to a wide range of distribution shifts. The authors found that using the variance of losses as a regularizer is stabler and more effective compared to other penalties. Therefore, they propose the following regularization term for $n$ training environments: $R_{\text{VarREx}}\left( g_\theta, E_\text{train}\right)=\text{Var}\left(\ell^{e_{1}}(g_{\theta}),\dots,\ell^{e_{n}}(g_{\theta})\right). $

While simple and intuitive, this approach assumes that losses across different environments accurately reflect the classifier's performance. However, as discussed in \S \ref{sec:proposed_approach}, this is often not true for deep metric learning losses, where significant changes in loss may correspond to only minor variations in performance.

\section{Parametric Model of Class Distribution Shifts in Zero-Shot Learning} \label{sec:parametric}
In this section, we introduce a parametric model of class distribution shifts. Our model shows that in zero-shot learning, even if the conditional distribution of data given the class $P(z \vert c)$ remains the same between training and testing, a shift in the class distribution from $P(c)$ to $Q(c)$ can cause poor performance on newly encountered classes sampled from the shifted distribution $Q(c)$.

Assume that for all classes, the data points $z_i \in \mathbb{R}^{d}$ are sampled from $z_i \vert c_i \sim \mathcal{N}\left(c_i, \Sigma_z \right)$, where $\Sigma_z=\nu_z \cdot I_d$, and $I_d$ is the identity matrix.
Let the attribute $A$ indicate the type of a class $c$, with two possible types: $a_1$ and $a_2$. Assume that the classes $c_i$ are drawn according to $c_i \sim \mathcal{N}\left(0, \Sigma_{a} \right)$ for $a \in \{a_1, a_2\}$. Finally, assume that in training,
$a_1$ is the majority type with 
$P(a_1)=\rho_{\text{tr}} \gg 0.5$, 
whereas at test time, $a_2$ is the majority type with $Q(a_1)=\rho_{\text{te}} \ll 0.5$. 

We construct the model such that differences between the training class distribution $P(c)$ and the test distribution $Q(c)$ stem solely from a shift in the mixing probabilities of an unknown attribute $A$ (see Equation \ref{eq:P_Q}).
Therefore, we define $\Sigma_{a}$ as a diagonal matrix with replicates of three distinct values on its diagonal: $\nu_0, \nu^{+}, \nu^{-}$. 
Let $0<\nu^{-} < \nu_z \leq \nu_0 <\nu^{+}$. Then, in the coordinates corresponding to $\nu_0$ and $\nu^{+}$ data points from different classes are well separated, whereas in the coordinates corresponding to $\nu^{-}$ they are not.
Assume the coordinates corresponding to $\nu_0$ are shared by both types, but $\nu^{+}$ and $\nu^{-}$ are swapped:
\begin{align*}
\Sigma_{a_{1}} & =\operatorname{diag}\big(\overbrace{\nu_{0}^{\phantom{+}},\dots,\nu_{0}}^{d_0},\, \overbrace{\nu^{+},\dots,\nu^{+}}^{d_1},\, \overbrace{\nu^{-},\dots,\nu^{-}}^{d_2}\big) ,\\
\Sigma_{a_{2}} & =\operatorname{diag}\big(\,\nu_{0},\dots,\nu_{0}\,,\nu^{-},\dots,\nu^{-},\,\nu^{+},\dots,\nu^{+}\,\big).
\end{align*}

An illustration with one replicate of each value is shown in Figure \ref{Fig:illustration}.
\begin{figure}[ht]
   \begin{minipage}{0.41\textwidth}
     \centering
     \vspace{1.4em}
     \includegraphics[width=.95\linewidth]{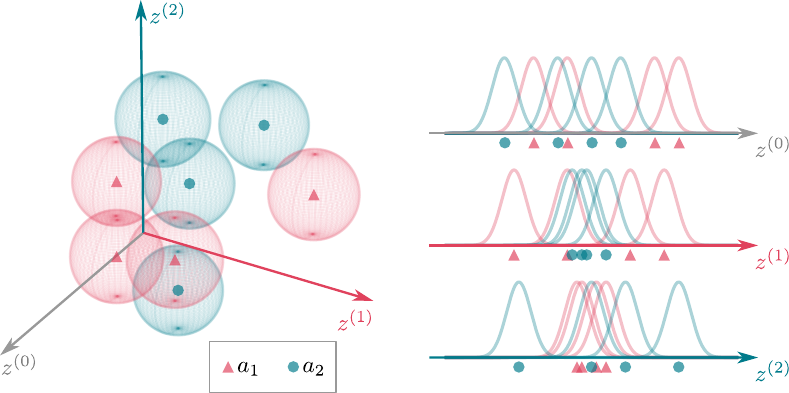}
     \vspace{1em}
     \caption{Illustration of the parametric model. Classes of each type are best separated along specific axes: classes of type $a_1$ along the red axis ($z^{(1)}$) and classes of type $a_2$ along the green axis ($z^{(2)}$). 
     On axis $z^{(0)}$ both types can be separated but not as effectively as on their respective optimal axes.}
     \label{Fig:illustration}
   \end{minipage}\hfill
   \begin{minipage}{0.56\textwidth}
     \centering
     \includegraphics[width=.95\linewidth]{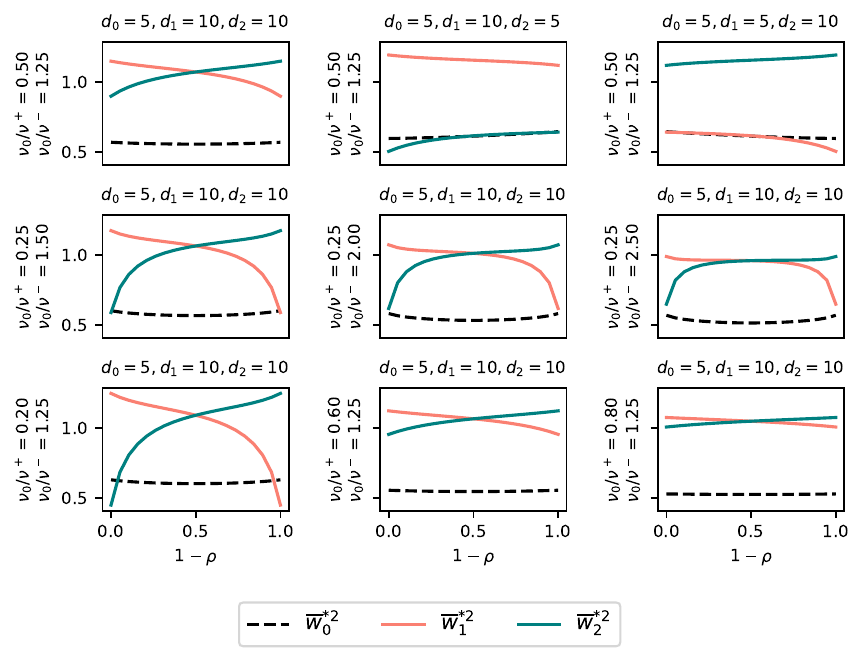}
     \vspace{-0.9em}
     \caption{Optimal weights. Top row: $d_0$ is fixed, $d_1$ and $d_2$ vary. Middle and bottom rows: $d_0, d_1, d_2$ are fixed. Middle:  $\nicefrac{\nu_0}{\nu^-}$ varies. Bottom: $\nicefrac{\nu_0}{\nu^+}$ varies.}
     \label{fig:model_loss}
   \end{minipage}
\end{figure}

The following proposition shows that if the number of dimensions $d_1$ that allow good separation for classes of type $a_1$ is relatively similar to the number of dimensions $d_2$ that enable good separation for classes of type $a_2$, specifically if $h_{l}\left(\rho,\nu_{z},\nu_{0},\nu_{1},\nu_{2} \right) < \frac{d_ 2 + 2}{d_1 + 2}< h_{u}\left(\rho,\nu_{z},\nu_{0},\nu_{1},\nu_{2} \right)$, then the optimal solution for the training distribution prioritizes the components (features) corresponding to $\nu^+$ for classes of type $a_1$. Thus, the prioritized features allow good separation for classes from the majority type in training, but offer poor separation for the shifted test distribution, where most classes are of type $a_2$.
Note that if $d_2$ is large, when combined, the corresponding components may still provide reasonable separation.
We define $h_l$ and $h_u$ in Equation \ref{eq:prop1_cond} and provide the proof of Proposition \ref{prop:ERM_fail} in Appendix \ref{sec:proof_ERM_fails}.

\begin{proposition} \label{prop:ERM_fail}
Consider a weight representation $g(z)=Wz$, where $W \in\mathbb{R}^{d \times d}$is a diagonal matrix, and the squared Euclidean distance $d_{g}\left(z_{i},z_{j}\right)=\left\Vert W\left(z_{i}-z_{j}\right)\right\Vert ^{2}$.
Let $W^* =\text{diag}(w^*) \in \arg \min_{W} \mathbb{E}\left[\widetilde{\ell}\left(\cdot, \cdot, \cdot;d_{g}\right)\right]$.
Denote $\overline{w}^{*2}_{1} = \frac{1}{d_1} \sum_{k=d_0+1}^{d_1}w^{*}_k$ and  $\overline{w}^{*2}_{2} = \frac{1}{d_2} \sum_{k=d_1+1}^{d}w^{*2}_k$.
Then, for all $\rho > \frac{1}{2}$ and $\nu_{z}, \nu_{0},\nu_{1}, \nu_{2}, d_{1}, d_{2}$ satisfying
$h_{l}\left(\rho,\nu_{z},\nu_{0},\nu_{1},\nu_{2} \right) < \frac{d_2 + 2}{d_1 + 2}< h_{u}\left(\rho,\nu_{z},\nu_{0},\nu_{1},\nu_{2} \right)$
it holds that $d_2 \overline{w}^{*2}_{2} \leq d_1 \overline{w}^{*2}_{1}$.
\end{proposition}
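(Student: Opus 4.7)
The plan is to exploit the diagonal structure of $W$ to turn the optimization into a convex quadratic program in the coordinate weights $u_k := w_k^2$, reduce it to three scalar variables by block symmetry, and then extract the inequality from the closed-form optimum.

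First, set $\Delta_k := z_i^{(k)} - z_j^{(k)}$ and $D := \|W(z_i - z_j)\|^2 = \sum_k u_k \Delta_k^2$. The no-hinge loss expands to $\widetilde{\ell} = D^2 + (1-y)(m^2 - 2mD)$, so
\[
\mathbb{E}[\widetilde{\ell}] \;=\; \sum_{k,l} u_k u_l \, \mathbb{E}[\Delta_k^2\Delta_l^2] \;-\; 2m \sum_k u_k \, \mathbb{E}[(1-y)\Delta_k^2] \;+\; m^2(1-p_y),
\]
where $p_y := \mathbb{P}(y = 1)$. This is a convex quadratic in $u \in \mathbb{R}^d_{\geq 0}$, and any minimizer in $W$ corresponds to one in $u$. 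I would compute the required moments by conditioning on $y$ and on the class types $(a_i, a_j) \in \{a_1, a_2\}^2$, which under independent sampling take the four joint values with probabilities $\rho^2$, $\rho(1-\rho)$, $(1-\rho)\rho$, $(1-\rho)^2$. Given the types, each $\Delta_k$ is a centered Gaussian whose variance depends only on the block containing $k$, so $\mathbb{E}[\Delta_k^4] = 3\operatorname{Var}(\Delta_k)^2$ and $\mathbb{E}[\Delta_k^2\Delta_l^2]$ factorizes for $k\neq l$; marginalizing over the types yields explicit polynomial coefficients in $\rho, \nu_z, \nu_0, \nu^+, \nu^-$.

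By permutation symmetry within each of the three blocks, every minimizer satisfies $u_k = u_b^*$ for all $k$ in block $b \in \{0, 1, 2\}$, so the stationarity conditions collapse to a $3 \times 3$ linear system in $(u_0^*, u_1^*, u_2^*)$ that I would solve by Cramer's rule. The crucial algebraic feature is that the intra-block sum $\sum_{k,l\in\text{block }b}\mathbb{E}[\Delta_k^2\Delta_l^2]$ evaluates (for fixed types, for Gaussian $\Delta_k$ of variance $\sigma_b^2$) to $d_b(d_b-1)\sigma_b^4 + 3 d_b \sigma_b^4 = d_b(d_b+2)\sigma_b^4$, which is exactly the source of the $(d_1+2)$ and $(d_2+2)$ factors appearing in the statement. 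Evaluating $d_1 u_1^* - d_2 u_2^*$ from the closed-form solution and factoring $(d_2+2)/(d_1+2)$ out of the resulting expression then turns the desired inequality into a sign condition: the upper bound $h_u$ should correspond to this ratio being small enough that the majority type dominates the weighted allocation, while the lower bound $h_l$ should ensure $u_2^* \geq 0$ so that block 2 is not clipped to the boundary (which would also imply the inequality, but trivially, and would lie outside the stated regime).

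The main obstacle I expect is algebraic bookkeeping: the cross-block terms in $\mathbb{E}[D^2]$ couple the three stationarity equations, so cleanly separating the $d_b(d_b+2)$ intra-block contributions from the $d_b d_{b'}$ cross-block ones is tedious. Once that separation is done, combining the type-averaged variances (which are the mixtures $\rho\nu^+ + (1-\rho)\nu^-$ for block~1 and $(1-\rho)\nu^+ + \rho\nu^-$ for block~2, together with $\nu_0$ for block~0 and the positive-pair variance $2\nu_z$) into the explicit thresholds $h_l$ and $h_u$ is routine, and the final inequality reduces to a finite algebraic check.
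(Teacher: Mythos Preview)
Your plan is essentially the paper's: reduce the expected no-hinge loss to a convex quadratic in the squared diagonal entries, invoke within-block permutation symmetry to collapse to three scalar unknowns $(u_0,u_1,u_2)$, write the stationarity system, and read off the sign of $d_1 u_1 - d_2 u_2$. The $(d_b+2)$ factor you isolate from the Gaussian fourth-moment identity is exactly the source of the $(d_1+2),(d_2+2)$ in the statement.

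Two differences are worth flagging. First, the paper does \emph{not} solve the $3\times 3$ system via Cramer's rule. Instead it expresses each $u_b$ from its own stationarity equation in terms of the other two, substitutes, and writes
\[
d_1 u_1 - d_2 u_2 \;=\; (\text{positive scalar})\cdot \Delta,
\]
where $\Delta$ is \emph{linear} in the still-unknown $d_0 u_0$ and $d_2 u_2$ (plus a constant term). Since $u_0,u_2\ge 0$, it suffices that the constant term and both linear coefficients in $\Delta$ be positive; these three sign conditions are precisely what yield $h_l$ and $h_u$ (one of the three is subsumed by another). This sidesteps the full closed-form solution; your Cramer's-rule path would work but is algebraically heavier.

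Second, your reading of $h_l$ as a feasibility constraint (``ensure $u_2^\ast\ge 0$ so that block~2 is not clipped'') is not what happens in the paper. Both $h_l$ and $h_u$ arise from the \emph{same} sign analysis of $\Delta$: $h_l$ is the condition making the constant term of $\Delta$ positive, and $h_u$ is the condition making the coefficient of $d_0 u_0$ positive. The paper never verifies that the interior critical point is feasible; it simply shows that at any stationary point with $u_0,u_2\ge 0$ the desired inequality holds. Your concern about boundary clipping is a reasonable extra check, but it is not what $h_l$ encodes, and building the proof around that interpretation would send you looking for the wrong algebraic relation.
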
 

Note that the conditions outlined in Proposition \ref{prop:ERM_fail} are sufficient but not necessary. Accordingly, in Appendix \ref{sec:full_sol}, we provide the complete analytical solution for $w^*$ that minimizes the expected loss $\mathbb{E}\left[\widetilde{\ell}\left(\cdot, \cdot, \cdot;d_{g}\right)\right]$ for the weight representation $g(z)=Wz$, using the squared Euclidean distance. According to Proposition \ref{prop:ERM_fail}, larger $d_2$ values favor $\nu^-$ for better aggregated separation. Increasing $\nu_0/\nu^+$ leads to increased differences between ${w}^{*2}_{1}$ and ${w}^{*2}_{2}$, and vice versa for $\nu_0/\nu^-$.

These relationships in the optimal solution are illustrated in Figure \ref{fig:model_loss}, showcasing different scenarios. The top row shows that when $d_1=d_2=10$ dimensions favoring classes of type $a_1$ are prioritized for $\rho>0.5$, while those favoring type $a_2$ are prioritized for $\rho<0.5$. When $d_1=10$ while $d_2=5$, dimensions favoring type $a_1$ are prioritized for all values of $\rho$, and vice versa when $d_2$ is significantly larger than $d_1$.
The middle and the bottom row further explore the $d_1=d_2$ case, showing how differences in separability between shared dimensions ($\nu_0$) and type-favoring dimensions impact weight allocation. 

Since components corresponding to $\nu^+$ for classes of type $a_1$ align with $\nu^-$ for classes of type $a_2$, the optimal representation for the training distribution results in poor separation for the shifted test distribution.
Therefore, a  robust representation should prioritize dimensions that provide effective separation for both class types, corresponding to $\nu_0$.

This aligns with a common principle in the OOD generalization field, where robust representations are those that rely on features shared across environments (see \S \ref{sec:related_work_OOD}). This principle is often referred to as \emph{invariance}.

\section{Proposed Approach} \label{sec:proposed_approach}

Motivated by our analysis of the parametric model, we propose a new approach for tackling class distribution shifts in zero-shot learning. Our approach revolves around two key ideas: (i) during training, different mixtures of the attribute $A$ can be produced by sampling small subsets of the classes, forming artificial environments, and (ii) penalizing for differences in performance across these environments is likely to increase robustness to the class mixture encountered at test time.

\subsection{Synthetic Environments}
Standard ERM training involves sampling pairs of data points $(z_i, z_j)$ uniformly at random from all $N_c$ classes available during training. 
However, as discussed in \S \ref{sec:parametric}, this is prone to overfitting to the attribute distribution of the training data. 
Since the identity of the attribute is unknown, weighted sampling (and similar approaches) cannot be used to create environments with different attribute mixtures.

Yet, our goal is to design artificial environments with diverse compositions of the (unknown) attribute of interest. 
To do so, we leverage the variability in small samples: while class subsets of similar size to $N_c$ maintain attribute mixtures similar to the overall training set, smaller subsets with $k \ll N_c$ classes are likely to exhibit distinct attribute mixtures. 
Therefore, we propose creating multiple environments, composed of examples from few sampled classes.

This results in a hierarchical sampling scheme for the data pairs: 
first, sample a subset of $k$ classes, $S=\{c_1, \dots, c_k\}$. 
Then, for each $c\in S$ sample $2r$ pairs of data points as follows: $r$ pairs from within the class $c$, $\{z_i ; c_i = c \}$,  uniformly at random (positive pairs); and $r$ negative pairs, where one point is sampled uniformly at random from $c$, and the other from all other data points in $S$, $\{z_i ; c_i \neq c, c_i \in S \}$.\footnote{Here, for simplicity we create balanced environments, but different proportions of positive examples can be considered instead.}

Across multiple class subsets $S = \{S_1, \dots, S_n\}$, this hierarchical sampling results in diverse mixtures of any unknown attribute (see Figure \ref{fig:sampling}).
In particular, in some of the class subsets, classes from the overall minority type constitute the majority in the environment.

\begin{figure}[ht]
\vskip 0.1in
\begin{center}
\centerline{\includegraphics[width=0.85\columnwidth]{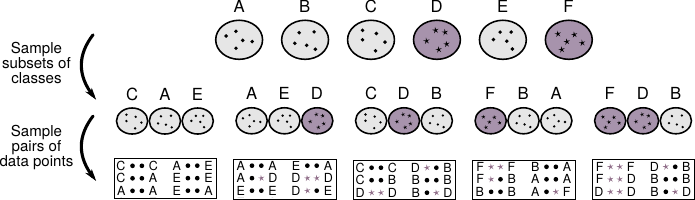}}
\caption{Illustration of the proposed hierarchical sampling.
Top:  $N_c=6$  classes, with 2 minority-type classes D, F (in purple). Middle: synthetic environments formed by sampling small ($k=3$) class subsets; in $\nicefrac{1}{5}$ of the environments, minority-type classes become the majority constituting $\nicefrac{2}{3}$ of the classes. Bottom: sampling $r=1$ positive and $r=1$ negative pairs for each class in the environment.}
\label{fig:sampling}
\end{center}
\vskip -0.1in
\end{figure}

\subsection{Environment Balancing Algorithm for Class Distribution Shifts}
Our goal is to learn data representations that will allow separation between classes without knowing which attribute is expected to change and how significantly. Therefore, we require the learned data representation to perform similarly well on all mixtures obtained on the synthetic environments.

To achieve this, inspired by OOD performance balancing methods  (see \S \ref{sec:related_work_OOD}), we optimize a penalized objective:
\begin{equation}
\min_\theta \quad \sum_{l=1}^n \ell^{S_l}(g_\theta) 
+ \lambda R\left(S_1, \dots, S_n\right)
\end{equation}
where $ R\left(S_1, \dots, S_n\right)$ is any balancing term between the constructed synthetic environments. 

Note that computing $R\left(S_1, \dots, S_n\right)$ often involves evaluating some value on each environment separately. For a general balancing term, we denote the value in the $l$-th environment as $f(S_l)$ and accordingly express $R\left(S_1, \dots, S_n\right) = \mathring{f}\left(f(S_1), \dots, f(S_n) \right)$, where $\mathring{f}$ represents the corresponding aggregation function. Our  approach\footnote{For notation simplicity we assume that the unpenalized training loss is applied to pairs of data points $(x_{ij}, y_{ij}) = ((z_i, z_j), \mathbbm{1}_{c_i = c_j})$, but it can easily be adapted for any tuple size (e.g., triplets).} for balancing performance across synthetic environments of class subsets, is outlined in Algorithm \ref{algo}.  

\begin{algorithm}
   \caption{Robust Zero-Shot Representation}
   \label{algo}
\begin{algorithmic}
   \STATE {\bfseries Input:} Labeled data $D=\{z_i, c_i\}_{i=1}^{N_z}$, number of synthetic environments  $n$, number of classes within subset $k$, number of pairs per class $2r$, neural network $g(\cdot; \theta)$, loss $\ell$, distance function $d$, regularization functions $f$, $\mathring{f}$,
   initial weights $\theta_0$, number of training iterations $T$, learning rate $\eta$
   \STATE {\bfseries Output:} Learned representation $g(\cdot; \theta_T)$
   \STATE Compute  unique classes $C^* = \{c^{(1)}, \dots, c^{(N_c)}\}$
   \FOR{$t = 1$ {\bfseries to} $T$}
   \FOR{$l = 1$ {\bfseries to} $n$}
   \STATE Sample  $k$ classes from $C^*$ without replacement: $S_l^{(t)} = \{ c_l^{(1)}, \dots, c_l^{(k)} \}$.\\
   \STATE From each class in $S_l^{(t)}$ sample $r$ positive and $r$ negative data pairs.
   Denote the set by $D_l^{(t)}$.
   \STATE Compute $f(S_l^{(t)})$.\\
   \STATE Compute average unpenalized loss over $(x_m, y_m) \in D_l^{(t)}$:
    $\quad \bar{\ell}_{l}^{(t)} = \frac{1}{2rk} \sum_{m=1}^{2rk} \ell(x_m, y_m)$.\\
    \vspace{0.5em}
    \ENDFOR
    \STATE Compute 
        $\! R^{(t)} \! \coloneqq \! R\left(S_1^{(t)}, \dots, S_n^{(t)}\right) \! = \! \mathring{f}\left(f(S_1^{(t)}), \dots, f(S_n^{(t)}) \right)$. \\
    \STATE Update network parameters performing a gradient descent step:\\
    $\theta^{(t)} \leftarrow\theta^{(t-1)} -\eta\nabla_{\theta}\left(\frac{1}{n}\sum_{l=1}^{n}\bar{\ell}_{l}^{(t)}+R^{(t)}\right)$
   \ENDFOR
   \STATE {\bfseries Return:} $g(\cdot; \theta_T)$
\end{algorithmic}
\end{algorithm}

\subsection{Balancing Performance Instead of Loss}
In multiple OOD penalties (e.g., IRM and VarREx), $f$ represents the loss in each environment, which, in deep metric learning algorithms, is based on distance.
This presents a challenge in zero-shot verification, where sampled tuples often include numerous easy negative examples, leading to performance plateau early in the learning process, although the distances themselves still exhibit considerable variations. Strategies like selecting the most difficult tuples \citep{hermans2017defense} were proposed to address this issue, however these methods have been found to generate noisy gradients and loss values \citep{metric2}. 

We therefore propose to balance performance directly instead of relying on the losses in the training environments. 
Denote the set of negative pairs in a synthetic environment by $D_l^0=\{x_{ij}=(z_i, z_j): c_i, c_j \in S_l, y_{ij} = 0 \}$ and the set of positive pairs by  $D_l^1=\{x_{ij}=(z_i, z_j): c_i, c_j \in S_l, y_{ij}=1 \}$.
An unbiased estimator of the AUC on a given synthetic environment $S_l$ is given by 
\begin{equation}
    \widetilde{\text{AUC}} \left(S_{l};d_g\right)= 
 \frac{1}{\left|D_{l}^{0}\right|\left|D_{l}^{1}\right|}
 \smashoperator{\sum_{x_{ij}}}
 \smashoperator{\sum_{x_{uv}}}
 \mathbbm{1}\left[d_{g}(x_{ij})<d_{g}(x_{uv})\right]
\end{equation}
for $x_{ij}\in D_{l}^{1}$ and $x_{uv}\in D_{l}^{0}$.
Since this estimator is non-differentiable and therefore cannot be used in gradient-descent-based optimization, we use \emph{soft-AUC} as an approximation \citep{calders2007efficient}
\begin{equation}
  \widehat{\text{AUC}} \left(S_{l};d_g\right)
  \frac{1}{\left|D_{l}^{0}\right|\left|D_{l}^{1}\right|}
  \smashoperator{\sum_{x_{ij}}}
  \smashoperator{\sum_{x_{uv}}}
 \sigma_{\beta}\left(d_{g}(x_{uv})-d_{g}(x_{ij})\right)
\end{equation}
where a sigmoid $\sigma_{\beta}(t)=\frac{1}{1+e^{-\beta t}}$ approximates the step function. Note that when $\beta\rightarrow\infty$, $\sigma_{\beta}$ converges pointwise to the step function. Consequently, we propose the penalty:
\begin{align}
 & R_{\text{VarAUC}}\left(S_{1},\dots,S_{n}; g_d\right) = \widehat{\text{Var}}\left(\widehat{\text{AUC}}\left(S_{1};g,d\right),\dots,
\widehat{\text{AUC}}\left(S_{n};g,d\right)\right). 
\end{align}

\subsection{How Many Environments Are Needed?} \label{sec:number_of_envs}
The proposed hierarchical sampling scheme allows for the construction of many synthetic environments with various attribute mixtures, influenced by the number of classes in each environment. 
As shown in the analysis below, this ensures that with high probability there will be at least one environment with a pair of minority type classes, thereby supporting learning to separate negative pairs within the minority type.

In each training iteration, we consider $n$ class subsets (environments) of size $k$. 
Our goal is to achieve robustness to all attribute values $a$ that are associated with at least $\rho_\text{min} \in(0,1)$ of the training classes. Note that $\rho_\text{min}$ is specified by the practitioner without  knowledge of the true attribute that may cause the shift or its true prevalence $\rho$ in the training set.

We compute the number of synthetic environments $n$, such that with high probability of $(1-\alpha)$, $S_1,\dots,S_n$ will include at least one subset with at least two classes associated with $a$ (otherwise none of the subsets would contain negative pairs with the attribute $a$). Denote the probability of a given subset not to contain any class associated with $a$ by $\phi_{0}=\nicefrac{{\left\lceil (1-\rho_\text{min})N_{c}\right\rceil \choose k}}{{N_{c} \choose k}}$ and  the probability of a given subset to contain exactly one such  class by $\phi_{1}=\nicefrac{\rho_\text{min} N_{c}{\left\lceil (1-\rho_\text{min})N_{c} \right\rceil \choose k-1}}{{N_{c} \choose k}}$. Therefore, the required number of environments needed to ensure that at least two minority-type classes appear together in the same environment is
\begin{align} \label{eq:n} 
n \approx \frac{\log\left(\alpha\right)}{\log\left(\phi_{0}+\phi_{1}\right)}.
\end{align}
\vspace{0.5em}
Note that that $n$ is typically much smaller than ${N_c \choose k}$.

\section{Empirical Results} \label{results}

Our method enhances standard training with two components: a hierarchical sampling scheme and a balancing term for synthetic environments. To the best of our knowledge, this is the first work addressing OOD generalization for class distribution in zero-shot learning. We therefore benchmark our algorithm against the ERM baseline (uniform random sampling with an unpenalized score) and a hierarchical sampling baseline (hierarchical sampling with unpenalized score). 
\begin{wrapfigure}{r}{0.5\textwidth}
\vspace{-0.5em}
    \centering
\includegraphics[width=0.5\textwidth]
{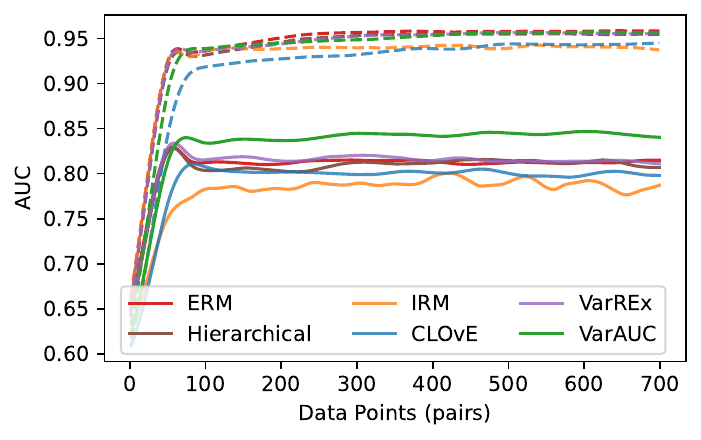}\vspace{-2em}
    \caption{Average AUC over 10 simulation repetitions for majority attribute proportion $\rho=0.9$ in training (and 0.1 in test). Solid lines: distribution-shift. Dashed lines: in-distribution. Our method improves robustness for shifts, without compromising training distribution results.}
    \vspace{-1.5em} 
    \label{Fig:simulation_auc_curves}
\end{wrapfigure}
Additionally, we tested standard regularization techniques including dropout and the $L_2$ norm, which did not yield notable improvements in the distribution shift scenario, and are therefore not shown.

To ensure a comprehensive comparison, in addition to the proposed VarAUC penalty, we evaluate variants of our algorithm in which the IRM, CLOvE, and VarREx penalties are used instead. While we show that VarAUC consistently outperforms other penalties, the crucial improvement lies in its performance compared to the ERM baseline: application of existing OOD penalties is enabled  by the construction of synthetic environments in our algorithm. As discussed in Appendix \ref{sec:comp_OOD}, this construction facilitates the formulation of class distribution shifts in zero-shot learning within the OOD setting.

In all of the experiments performed, we trained the network with contrastive loss (Equation \ref{contrastive}) and the normalized cosine distance: $d_g(z_1, z_2) = \frac{1}{2} \left(1 - \frac{g(z_{1})\cdot g(z_{2})}{\left\Vert g(z_{1})\right\Vert \left\Vert g(z_{2})\right\Vert } \right)$.
The specific setups are detailed below (additional details can be found in Appendix \ref{sup:implement}), and code to reproduce our results is available at \href{https://github.com/YuliSl/Zero_Shot_Robust_Representations
}{https://github.com/YuliSl/Zero\_Shot\_Robust\_Representations
}.

\subsection{Simulations: Revisiting the Parametric Model} \label{simulations}

We now revisit the parametric model presented in \S \ref{sec:parametric}. To increase the complexity of the problem, we add dimensions where classes from both types are not well separated. That is, $\Sigma_a$ includes additional dimensions set to zero. 

\textbf{Setup} We used 68 subsets in each training iteration, each consisting of two classes. This corresponds to choosing $\rho_\text{min}=0.1$ (desired sensitivity, regardless of the true unknown parameter $\rho \in \{0.05, 0.1, 0.3\}$), with a low $\alpha$ value of 0.5, resulting in the construction of fewer environments according to Equation \ref{eq:n}.
For each class, we sampled $2r=10$ pairs of data points.
The representation was defined as $g(z) = wz$ for $w\in \mathbb{R}^{d \times p}$ \footnote{A linear representation is chosen to facilitate an analysis of the learned representation space.}. Here we focus on the case of $p=16$, $\nu_z=\nu_0=1$, $\nu^{-}=0.1$, $\nu_{+} = 2$, $d_0=5$, $d_1=d_2=10$.
The results for additional representation sizes $p$, noise ratios $\frac{\nu^{+}}{\nu^{-}}$ and varying proportions of positive and negative examples are presented in Appendix \ref{sup:simulations}.

To assess the importance assigned to each dimension, we examine weight values relative to other weights: $\text{Importance}_{i}=\left|\frac{\sum_{j=1}^{p}w_{ij}}{\sum_{i'=1}^{d}\sum_{j'=1}^{p}w_{i'j'}}\right|. \inlineeqno$

\textbf{Results} In Figure 
\ref{Fig:simulation_importance} we examine the learned representation. The analysis indicates that ERM prioritizes dimensions 5-15, providing good separation for $a_1$, the dominant type in training,
\begin{wrapfigure}{r}{0.5\textwidth}
    \centering
    \vspace{-0.5em}
    \includegraphics[width=0.5\textwidth]{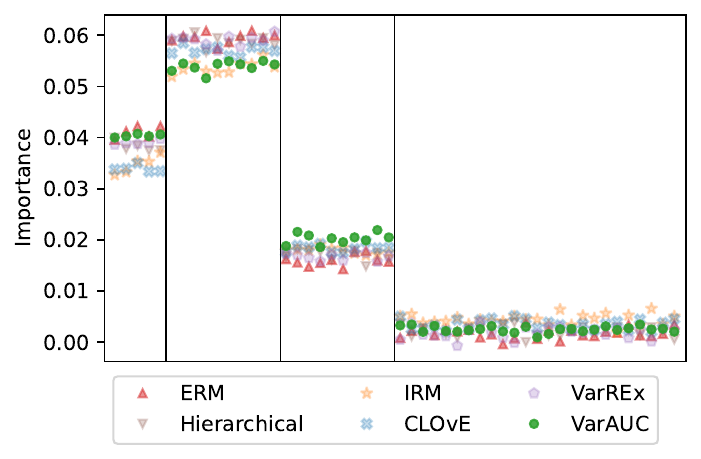}
    \vspace{-1em}
    \caption{Average feature importance for $\rho=0.9$, 10 repetitions. Our VarAUC penalty favors shared features (blocks 1 and 3), while deprioritizing majority features (block 2). All methods assign low weight to noise features (block 4).}
    \vspace{-1em}
     \label{Fig:simulation_importance}
\end{wrapfigure}
  but leading to poor separation after the shift. ERM assigns low weights to dimensions beneficial for both types (0-5) and those suitable for $a_2$ (15-25). In contrast, our algorithm, particularly with the two variance-based penalties, assigns the lowest weights to dimensions corresponding to $a_1$ and higher weights to shared dimensions and those that effectively separate $a_2$ classes.

In Figure \ref{Fig:simulation_auc_curves}, 
the learning progress is depicted for $\rho=0.9$ (a similar analysis for $\rho = 0.95$ and $\rho=0.7$ can be found in Appendix \ref{sup:results}). Performance on the same distribution as the training data is similar for ERM and our algorithm, suggesting that applying our algorithm does not negatively impact performance when no distribution shift occurs. However, when there is a distribution shift our algorithm achieves much better results.  The VarREx penalty achieves high AUC values more quickly than the VarAUC penalty, but the VarAUC penalty attains higher overall accuracy. 
IRM shows noisier convergence, 
since it is applied directly on the gradients, which have been shown to be noisy in contrastive learning due to high variance in data-pair samples \citep{metric2}. 
Means and standard deviations are reported in Appendix \ref{sup:simulations}, as well as the results for additional data dimensions, positive proportions, and variance ratios.

\subsection{Experiments on Real Data} \label{real_data}

\begin{wrapfigure}{r}{0.5\textwidth}
    \centering
    \vspace{-2em}
\includegraphics[width=0.5\textwidth]{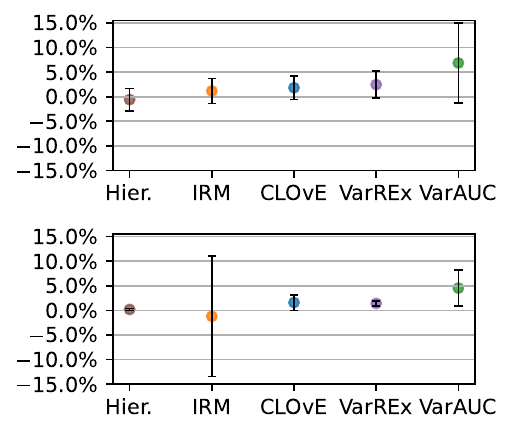}
\vspace{-0.5em}
    \caption{Average percentage changes of our method compared to ERM across 10 repetitions are shown for the ETHEC (top) and CelebA (bottom) datasets. Error bars represent $\pm$ one std-dev.}
    \vspace{-1em}
     \label{fig:experiments}
\end{wrapfigure}
\textbf{Experiment 1 - Species Recognition} We used the ETHEC dataset \citep{dhall2019eth} which contains 47,978
butterfly images from six families and 561 species (example of the images are provided in Appendix \ref{sup:results}). We filtered out species with less than five images and focused on images of butterflies from the Lycaenidae and Nymphalidae families. In the training set, $10\%$ of the species were from the Nymphalidae family, while at test time, $90\%$ of the species were from the Nymphalidae family. For each class we sampled $2r=20$ pairs.

\textbf{Experiment 2 - Face Recognition} We used the \emph{CelebA} dataset \citep{liu2015faceattributes} which contains 202,599 images of 10,177 celebrities. We filtered out people for which the dataset contains less than three images. Following \citet{vinyals2016matching}, we implemented $g$ as a convolutional neural network which has four modules with $3 \times 3$ convolutions and 64 filters, followed by
batch normalization, a ReLU activation, $2 \times 2$ max-pooling, and a fully connected layer of size 32.
We used the attribute \emph{blond hair} for the class distribution shift: for training, we mainly sampled people without blond hair ($95\%$), while at test time, most people ($95\%$) had blond hair. Each training iteration had 150 synthetic environments of two classes and $2r=20$ data points per class.

We trained the models on 200 synthetic environments at a time, each of two classes. We implemented $g$ as a fully connected neural network with layers of sizes 128, 64, 32 and 16, and ReLU activations between them.

\textbf{Experimental Results}
 As can be seen in Figure \ref{fig:experiments}, while all versions of our algorithm show some improvement over ERM, the best results are achieved with the VarAUC penalty (exact means and standard deviations are reported in Table \ref{experiments_table} in Appendix \ref{sup:results}). One-sided paired t-tests show that the improvement over ERM achieved by our algorithm with the VarAUC penalty is statistically significant, with p-values of $< 0.04$ on both datasets; p-values for other penalties are reported in Table \ref{p_values}. All p-values were adjusted with FDR \citep{fdr} correction. 

In Appendix \ref{sup:results} we also provide additional analysis confirming that the main improvement of our algorithm over the ERM baseline stems from improved performance on negative minority pairs. 

\section{Discussion} \label{discussion}
In this study, we examined class distribution shifts in zero-shot learning, with a focus on shifts induced by unknown attributes. Such shifts pose significant challenges in zero-shot learning where new classes emerge in testing, causing standard techniques trained via ERM to fail on shifted class distributions, even when the conditional distribution of the data given class remains the same.

Previous research (see Appendix \ref{sec:related_work}) assumes closed-world classification or a known cause, making these methods unsuitable for zero-shot learning or shifts caused by unknown attributes. In response, we introduced a framework and the first algorithm to address class distribution shifts in zero-shot learning using OOD environment balancing methods.

In the causal terminology of closed-world OOD generalization,
our framework employs synthetic environments to intervene on attribute mixtures by sampling small class subsets, thereby manipulating the class distribution. This facilitates the creation of diverse environments with varied attribute mixtures, enhancing the distinction between negative examples. A further comparison of our framework with OOD environment balancing methods is provided in Appendix \ref{sec:comp_OOD}. 
Additionally, our proposed VarAUC penalty, designed for metric losses, enhances the separation of negative examples.

Our results demonstrate improvements compared to the ERM baseline on shifted distributions, without compromising performance on unshifted distributions, enabling the learning of more robust representations for zero-shot tasks and ensuring reliable performance.

While the proposed framework is general, our current experiments address shifts in a binary attribute. We defer exploration of additional scenarios, such as those involving shifts in multiple correlated attributes, to future work. An additional promising direction for future work is the consideration of shifts where the responsible attribute is strongly correlated with additional attributes or covariates. This opens up possibilities to explore structured constructions of synthetic environments that leverage such correlations.

\clearpage
\setcitestyle{numbers}
\bibliographystyle{plainnat}
\bibliography{references}

\clearpage
\appendix

\section{Related Work on Class Distribution Shifts in Closed-World Settings }\label{sec:related_work}
In \emph{class-imbalanced learning} \citep{lin2017focal, cui2019class, cao2019learning} it is assumed that some classes are more dominant in training, while in deployment this is no longer the case. Therefore, solutions classically include data or loss re-weighting 
\citep{shimodaira2000improving, byrd2019effect, ren2020balanced, menon2020long} and calibration of the classification score \citep{saerens2002adjusting, zhong2021improving}.
A popular framework for addressing class distribut ion shifts is  \emph{distributionally robust optimization} (DRO) \citep{DRO, duchi2021statistics, duchi2021learning, wei2023distributionally}, where instead of assuming a specific probability distribution, a set or range of possible distributions is considered, and optimization is performed to achieve the best results on the worst-case distribution.
A special case known as \emph{group DRO} \citep{sagawa2019distributionally, piratla2021focus}, involves a group variable that introduces discriminatory patterns among classes within specific groups.  The framework to address this includes methods that assume that the classifier does not have access to the group information, and therefore propose re-weighting high loss examples \citep{liu2021just}, and data sub-sampling to balance classes and groups \citep{idrissi2022simple}.
Nevertheless, the methods mentioned above rely on the training and test class sets being identical, making them unsuitable for direct application in zero-shot learning scenarios.

\section{Analysis of the Parametric Model} \label{sec:parametric_sup}

\subsection{Derivation of the Loss}

We begin by revisiting the parametric model introduced in \S \ref{sec:parametric}. 
Let $z_{i}\vert c_{i}\sim\mathcal{N}\left(c_{i},\Sigma_{z}\right)$,
where $\Sigma_{z}=\nu_{z} I_{d}$, $ 0 < \nu_{z} \in \mathbb{R}$, and $\ensuremath{I_{d}}$ is
the $d$ dimensional identity matrix. 
Classes $c_{i}$ are drawn according to a Gaussian distribution $c_{i}\sim\mathcal{N}\left(0,\Sigma_{a}\right)$
corresponding to their type $a\in\{a_{1},a_{2}\}$.
Here, we use a simpler (although less intuitive) notation for the values of the diagonal matrices $\Sigma_{a}$:
\begin{align*}
\Sigma_{a_{1}} & =\operatorname{diag}\big(\overbrace{\nu_{0},\dots,\nu_{0}}^{d_0},\, \overbrace{\nu_{1},\dots,\nu_{1}}^{d_1},\, \overbrace{\nu_{2},\dots,\nu_{2}}^{d_2}\big) ,\\
\Sigma_{a_{2}} & =\operatorname{diag}\big(\,\nu_{0},\dots,\nu_{0}\,,\nu_{2},\dots,\nu_{2},\,\nu_{1},\dots,\nu_{1}\,\big),
\end{align*}
where $0<\nu_{2}<\nu_{z}<\nu_{0}<\nu_{1}$.

We consider a weight representations $g(z)=Wz$, where $W$ is a diagonal matrix with diagonal $w\in\mathbb{R}^d$.

Since $\Sigma_z$ is of full rank, it suffices to consider the no-hinge version of the contrastive loss,  that is
\begin{align}
\widetilde{\ell}\left(z_{i},z_{j},y_{ij};d_{g}\right) & := y_{ij}\left\Vert W\left(z_{i}-z_{j}\right)\right\Vert ^{4}+\left(1-y_{ij}\right)\left(m-\left\Vert W\left(z_{i}-z_{j}\right)\right\Vert ^{2}\right)^{2}, 
\end{align}
where $d_{g}\left(z_{i},z_{j}\right)\coloneqq \left\Vert g\left(z_{i}-z_{j}\right)\right\Vert ^{2} = \left\Vert W\left(z_{i}-z_{j}\right)\right\Vert ^{2}$ ($\|\cdot\|$ denotes the Euclidean norm\footnote{Squared distance is selected for its simplicity in computing the expected value of even powers of the Euclidean norm of Gaussian variables.}).

For a balanced sample of positive and negative examples, the expected loss is given by
\begin{align} \label{eq:mean_contrastive}
\mathbb{E}\left[\widetilde{\ell}\left(z_{i},z_{j},y_{ij};d_{g}\right)\right]
& = \frac{1}{2}\mathbb{E}_{y_{ij}=1}\left[\left\Vert W\left(z_{i}-z_{j}\right)\right\Vert ^{4}\right] \nonumber \\
& \quad +\frac{1}{2}\mathbb{E}_{y_{ij}=0}\left[m^{2}-2m\left\Vert W\left(z_{i}-z_{j}\right)\right\Vert ^{2}+\left\Vert W\left(z_{i}-z_{j}\right)\right\Vert ^{4}\right].
\end{align}

To calculate the expression above, we begin by proving the following lemma:
\begin{lemma}\label{lem:1}
    Let $\mu\in\mathbb R^d$ be a random variable and let $t|\mu\sim\mathcal N(\mu, \Sigma)$. If $\mu \equiv 0$ (constant), then
    \begin{enumerate}
        \item\label{item:1} $\expect \|t\|^4 = 2\tr(\Sigma^2) + \tr^2(\Sigma)$ .
    \end{enumerate}
    If $\mu\sim\mathcal N(0, \Sigma_\mu)$, then
    \begin{enumerate}[resume]
        \item\label{item:2} $\expect \|t\|^2 = \tr(\Sigma) + \tr(\Sigma_\mu)$ ,
        \item\label{item:3} $\expect \|t\|^4 = 2\tr(\Sigma^2) + 4\tr(\Sigma\Sigma_\mu) + \tr^2(\Sigma) + 2\tr(\Sigma)\tr(\Sigma_\mu) + 2\tr(\Sigma_\mu^2) + \tr^2(\Sigma_\mu)$ .
    \end{enumerate}
\end{lemma}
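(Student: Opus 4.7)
\textbf{Proof proposal for Lemma \ref{lem:1}.} The plan is to treat Part \ref{item:1} as the core computation and then derive Parts \ref{item:2} and \ref{item:3} by reducing them to the zero-mean case via the marginal distribution of $t$. I would begin by proving Part \ref{item:1} directly. Write $t \sim \mathcal N(0,\Sigma)$ and expand
\[
\|t\|^4 = \Bigl(\sum_{i} t_i^2\Bigr)\Bigl(\sum_{k} t_k^2\Bigr) = \sum_{i,k} t_i^2 t_k^2.
\]
Taking expectations term by term and using Isserlis' (Wick's) theorem for a centered Gaussian vector,
\[
\expect[t_i t_j t_k t_l] = \Sigma_{ij}\Sigma_{kl} + \Sigma_{ik}\Sigma_{jl} + \Sigma_{il}\Sigma_{jk},
\]
yields $\expect[t_i^2 t_k^2] = \Sigma_{ii}\Sigma_{kk} + 2\Sigma_{ik}^2$. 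Summing over $i,k$ gives $\tr^2(\Sigma) + 2\tr(\Sigma^2)$, which is exactly the claim. (As a sanity check, one can alternatively diagonalize $\Sigma = U\Lambda U^\top$, write $\|t\|^2 = \sum_i \lambda_i \xi_i^2$ for i.i.d.\ $\xi_i \sim \mathcal N(0,1)$, and use $\expect \xi_i^4 = 3$, $\expect \xi_i^2\xi_j^2 = 1$ for $i \neq j$.)

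For Parts \ref{item:2} and \ref{item:3}, I would observe that when $\mu \sim \mathcal N(0,\Sigma_\mu)$ and $t \mid \mu \sim \mathcal N(\mu,\Sigma)$, the marginal law of $t$ is Gaussian with mean zero and covariance $\Sigma + \Sigma_\mu$; this follows from a direct characteristic-function computation or from the standard fact that the sum of independent Gaussians (namely $\mu$ and $t - \mu$) is Gaussian with added covariances. Part \ref{item:2} is then immediate: $\expect \|t\|^2 = \tr(\Sigma + \Sigma_\mu) = \tr(\Sigma) + \tr(\Sigma_\mu)$.

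For Part \ref{item:3}, I would apply Part \ref{item:1} to the marginal distribution $\mathcal N(0, \Sigma + \Sigma_\mu)$, obtaining
\[
\expect \|t\|^4 = 2\tr\bigl((\Sigma+\Sigma_\mu)^2\bigr) + \tr^2(\Sigma + \Sigma_\mu),
\]
and then expand both terms using linearity of trace and the identity $\tr\bigl((\Sigma+\Sigma_\mu)^2\bigr) = \tr(\Sigma^2) + 2\tr(\Sigma \Sigma_\mu) + \tr(\Sigma_\mu^2)$, together with $\tr^2(\Sigma+\Sigma_\mu) = \tr^2(\Sigma) + 2\tr(\Sigma)\tr(\Sigma_\mu) + \tr^2(\Sigma_\mu)$. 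Collecting terms yields exactly the claimed formula. I do not anticipate any real obstacle here; the main subtlety is just keeping the six trace terms organized and making sure the cross term $4\tr(\Sigma\Sigma_\mu)$ comes from $2\tr\bigl((\Sigma+\Sigma_\mu)^2\bigr)$ rather than from the squared-trace term, which is easy to verify by inspection.
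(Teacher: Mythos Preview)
Your proposal is correct and takes a genuinely different route from the paper. The paper proves Part~\ref{item:1} by quoting a general quadratic-form moment identity for Gaussians (specifically $\expect[u^TAu]^2 = 2\tr((A\Sigma_u)^2) + 4\mu_u^TA\Sigma_uA\mu_u + (\tr(A\Sigma_u)+\mu_u^TA\mu_u)^2$) and then derives Parts~\ref{item:2} and~\ref{item:3} via the tower property, conditioning on $\mu$ and computing $\expect_\mu[\mu^T\Sigma\mu]$ and $\expect_\mu\|\mu\|^4$ separately. You instead handle Part~\ref{item:1} from scratch via Isserlis' theorem, and then collapse Parts~\ref{item:2} and~\ref{item:3} to the zero-mean case by noting that the marginal of $t$ is $\mathcal N(0,\Sigma+\Sigma_\mu)$ and applying Part~\ref{item:1} to $\Sigma+\Sigma_\mu$. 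Your route is shorter and avoids the fixed-mean fourth-moment identity altogether; the paper's route, on the other hand, does not rely on recognizing the marginal law and would extend more readily to non-Gaussian $\mu$ (where your marginal argument would no longer apply). For the lemma as stated, both are complete.
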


\begin{proof}
For any random variable $u\in\mathbb R^d$, such that $u\sim\mathcal N(\mu_u,\Sigma_u)$, and any symmetric matrix $A$, we have
\begin{align}
    \expect_u [u^T A u] &= \tr(A\Sigma_u) + \mu_u^T A \mu_u ,\label{eq:square}\\
    \expect_u [u^T A u]^2 &= 2\tr\big((A\Sigma_u)^2\big) + 4\mu_u^T A\Sigma_u A \mu_u + \big(\tr(A\Sigma_u) + \mu_u^T A \mu_u \big)^2 \label{eq:fourth}
\end{align}
(see, for example, Thm.~3.2b.2 in \citep{quadratic}).

First, letting $\mu_u=0$, $\Sigma_u=\Sigma$ and $A=I_d$ in \eqref{eq:fourth} we get
\begin{equation}
    \expect \|t\|^4 = \expect [t^T t]^2 = 2\tr(\Sigma^2) + \tr^2(\Sigma) .
\end{equation}

Now, assume that $\mu\sim\mathcal N(0, \Sigma_\mu)$. From \eqref{eq:square} we get $\expect_\mu \|\mu\|^2 = \expect_\mu [\mu^T \mu] = \tr(\Sigma_\mu)$, and thus
\begin{equation}
    \expect \|t\|^2 = \expect_\mu \big[ \expect_{t|\mu} [t^T t \mid \mu] \big]
    = \expect_\mu\big[ \tr(\Sigma) + \mu^T \mu \big]
    = \tr(\Sigma) + \tr(\Sigma_\mu) .
\end{equation}

Similarly, from \eqref{eq:fourth} we have
\begin{equation}
    \expect \|t\|^4 = \expect_\mu \big[ \expect_{t|\mu} [[t^T t]^2 \mid \mu] \big]
    = 2\tr(\Sigma^2) + 4\expect_\mu [\mu^T\Sigma\mu] + \tr^2(\Sigma) + 2\tr(\Sigma)\expect_\mu\|\mu\|^2 + \expect_\mu\|\mu\|^4.
\end{equation}
By substituting $A=\Sigma$ in \eqref{eq:square} we get $\expect_\mu [\mu^T\Sigma\mu] = \tr(\Sigma\Sigma_\mu)$, and from \eqref{eq:fourth} we have 
\begin{equation}
\expect_\mu \|\mu\|^4 = 2\tr(\Sigma_\mu^2) + \tr^2(\Sigma_\mu).    
\end{equation}
Therefore,
\begin{equation}
    \expect \|t\|^4 = 2\tr(\Sigma^2) + 4\tr(\Sigma\Sigma_\mu) + \tr^2(\Sigma) + 2\tr(\Sigma)\tr(\Sigma_\mu) + 2\tr(\Sigma_\mu^2) + \tr^2(\Sigma_\mu).
\end{equation}
\end{proof}

Note that $W\left(z_{i}-z_{j}\right)\sim\mathcal{N}\left(\mu,\Sigma\right)$, with $\mu = W\left(c_{i}-c_{j}\right)$ and $\Sigma=2\nu_{z}W^{T}W$.

If $y_{ij}=1$, then $z_i$ and $z_j$ are from the same class, meaning that $c_i=c_j$ and thus $\mu=0$. Therefore, by Lemma~\ref{lem:1}.(\ref{item:1}) we have
\begin{align} \label{sim1}
\mathbb{E}_{y_{ij}=1}\left\Vert W\left(z_{i}-z_{j}\right)\right\Vert ^{4}
&=2\tr\left(\Sigma^{2}\right)+\tr^2\left(\Sigma\right) \nonumber \\
&=2\cdot4\nu_{z}^{2}\tr\left(\left[W^{T}W\right]^{2}\right)+4\nu_{z}^{2}\tr^2\left(W^{T}W\right) \nonumber \\
 & =8\nu_{z}^{2}\sum_{i=1}^{d}w_{i}^{4}+4\nu_{z}^{2}\left(\sum_{i=1}^{d}w_{i}^{2}\right)^{2}.
\end{align}

However, for pairs from different classes, that is, when $y_{ij}=0$, the mean $\mu$ is itself a Gaussian random variable distributed according to $\mathcal{N}\left(0,\Sigma_{\mu}\right)$,
where 
\begin{align}
\Sigma_{\mu}=\begin{cases}
W^{T}\left(2\Sigma_{a_{1}}\right)W & c_{i},c_{j}\text{ are both of type }a_{1}\\
W^{T}\left(2\Sigma_{a_{2}}\right)W & c_{i},c_{j}\text{ are both of type }a_{2}\\
W^{T}\left(\Sigma_{a_{1}}+\Sigma_{a_{2}}\right)W & c_{i},c_{j}\text{ are of different types .}
\end{cases}
\end{align}

Therefore, by Lemma~\ref{lem:1}.(\ref{item:2}) we have 
\begin{align}
\mathbb{E}_{y_{ij}=0}\left\Vert W\left(z_{i}-z_{j}\right)\right\Vert ^{2}
&= \mathbb{E}_{y_{ij}=0}\left[\tr\left(\Sigma_{\mu}\right)+\tr\left(\Sigma\right)\right]
= \mathbb{E}_{y_{ij}=0}\left[\tr\left(\Sigma_{\mu}\right)\right]+\tr\left(\Sigma\right) \nonumber \\
&= \rho^{2}\tr\left(2W^{T}\Sigma_{a_{1}}W\right) + (1-\rho)^{2} \tr\left(2W^{T}\Sigma_{a_{2}}W\right) \nonumber \\
&\quad + 2\rho(1-\rho)\tr\left(W^{T}\left(\Sigma_{a_{1}}+\Sigma_{a_{2}}\right)W\right) +\tr\left(\Sigma\right) \nonumber \\
&= 2\left[\left(\nu_{0}+\nu_{z}\right)\sum_{i=1}^{d_{0}}w_{i}^{2}+\left(\alpha_{1}+\nu_{z}\right)\sum_{i=d_{0}+1}^{d_{0}+d_{1}}w_{i}^{2}+\left(\alpha_{2}+\nu_{z}\right)\sum_{i=d_{0}+d_{1}+1}^{d}w_{i}^{2}\right],
\end{align}

where 
\begin{align} \label{eq:alpha_beta} \notag
\alpha_{1} & \coloneqq\rho^{2}\nu_{1}+(1-\rho)^{2}\nu_{2}+\rho(1-\rho)\left(\nu_{1}+\nu_{2}\right)=\rho\nu_{1}+\left(1-\rho\right)\nu_{2} ,\\ \notag
\alpha_{2} & \coloneqq\rho^{2}\nu_{2}+(1-\rho)^{2}\nu_{1}+\rho(1-\rho)\left(\nu_{1}+\nu_{2}\right)=\rho\nu_{2}+\left(1-\rho\right)\nu_{1} ,\\ \notag
\beta_{1} & \coloneqq 2\rho^{2}\nu_{1}^{2}+2(1-\rho)^{2}\nu_{2}^{2}+\rho(1-\rho)\left(\nu_{1}+\nu_{2}\right)^{2} ,\\
\beta_{2} & \coloneqq 2\rho^{2}\nu_{2}^{2}+2(1-\rho)^{2}\nu_{1}^{2}+\rho(1-\rho)\left(\nu_{1}+\nu_{2}\right)^{2} .
\end{align} 

By Lemma~\ref{lem:1}.(\ref{item:3}) we have
\begin{align} \label{sim2}
\expect_{y_{ij}=0}\left\Vert w\left(z_{i}-z_{j}\right)\right\Vert ^{4}
= & \expect_{y_{ij}=0}
\Bigl[
2\tr\left(\Sigma^{2}\right)+4\tr\left(\Sigma\Sigma_{\mu}\right)+\tr^2(\Sigma)+2\tr\left(\Sigma\right)\tr\left(\Sigma_{\mu}\right) \Bigr. \nonumber \\
& \Bigl. +2\tr\left(\Sigma_{\mu}^{2}\right)
+\left(\tr\left(\Sigma_{\mu}\right)\right)^{2} \Bigr]
 \nonumber \\
= & 2\tr(\Sigma^2) + 4\expect_{y_{ij}=0}[\tr(\Sigma\Sigma_\mu)] + \tr^2(\Sigma) + 2\tr(\Sigma)\expect_{y_{ij}=0}[\tr(\Sigma_\mu)]  \nonumber \\
&\quad + 2\expect_{y_{ij}=0}[\tr(\Sigma_\mu^2)] + \expect_{y_{ij}=0}[\tr^2(\Sigma_\mu)] ,
\end{align}
where
\begin{align}
\nonumber\expect_{y_{ij}=0}\left[\tr\left(\Sigma\Sigma_{\mu}\right)\right] = & 
2\nu_{z}\tr
\Bigl(
W^{T}W\left[2\rho^{2} W^{T}\Sigma_{a_{1}}W + 2(1-\rho)^{2} W^{T}\Sigma_{a_{2}}W \right. \Bigr. \\& + 
\Bigl. \left. 2\rho(1-\rho) W^{T}\left(\Sigma_{a_{1}}+\Sigma_{a_{2}}\right)W \right]
\Bigr) 
\nonumber \\
 = & 4\nu_{z}\left[\nu_{0}\sum_{i=1}^{d_{0}}w_{i}^{4}+\alpha_{1}\sum_{i=d_{0}+1}^{d_{0}+d_{1}}w_{i}^{4}+\alpha_{2}\sum_{i=d_{0}+d_{1}+1}^{d}w_{i}^{4}\right] ;  \\ \nonumber \\
\nonumber \expect_{y_{ij}=0}\left[\tr\left(\Sigma_{\mu}\right)\right] = & \rho^{2} \tr\left(2W^{T}\Sigma_{a_{1}}W\right)+(1-\rho)^{2} \tr\left(2W^{T}\Sigma_{a_{1}}W\right)
\\ & +2\rho(1-\rho) \tr\left(W^{T}\left(\Sigma_{a_{1}}+\Sigma_{a_{2}}\right)W\right) \nonumber \\
 = & 2\left[\nu_{0}\sum_{i=1}^{d_{0}}w_{i}^{2}+\alpha_{1}\sum_{i=d_{0}+1}^{d_{0}+d_{1}}w_{i}^{2}+\alpha_{2}\sum_{i=d_{0}+d_{1}+1}^{d}w_{i}^{2}\right] , r \\
\intertext{and so}
\tr\left(\Sigma\right)\expect_{y_{ij}=0}\left[\tr\left(\Sigma_{\mu}\right)\right]
& =4\nu_{z}\left(\sum_{i=1}^{d}w_{i}^{2}\right)\left[\nu_{0}\sum_{i=1}^{d_{0}}w_{i}^{2}+\alpha_{1}\sum_{i=d_{0}+1}^{d_{0}+d_{1}}w_{i}^{2}+\alpha_{2}\sum_{i=d_{0}+d_{1}+1}^{d}w_{i}^{2}\right] ;\\
\nonumber \\ 
\expect_{y_{ij}=0}\left[\tr\left(\Sigma_{\mu}^{2}\right)\right]
= & \rho^{2}\tr\left((2W^{T}\Sigma_{a_{1}}W)^2\right) + (1-\rho)^{2}\tr\left((2W^{T}\Sigma_{a_{2}}W)^2\right)\nonumber \\& + 2\rho(1-\rho)\tr\left((W^{T}\left(\Sigma_{a_{1}}+\Sigma_{a_{2}}\right)W)^2\right) \nonumber \\ 
= & 2\left[2\nu_{0}^{2}\sum_{i=1}^{d_{0}}w_{i}^{4}+\beta_{1}\sum_{i=d_{0}+1}^{d_{0}+d_{1}}w_{i}^{4}+\beta_{2}\sum_{i=d_{0}+d_{1}+1}^{d}w_{i}^{4}\right] ; \\
\intertext{and similarly}
\expect_{y_{ij}=0}\left[\tr^2\left(\Sigma_{\mu}\right)\right]
= & 2
\Biggl[
2\nu_{0}^{2} \left(\sum_{i=1}^{d_{0}}w_{i}^{2}\right)^{2} + \beta_{1}\left(\sum_{i=d_{0}+1}^{d_{0}+d_{1}}w_{i}^{2}\right)^{2} + \beta_{2}\left(\sum_{i=d_{0}+d_{1}+1}^{d}w_{i}^{2}\right)^{2} \Biggr. \nonumber \\
 &  + 4\gamma_{0,1}\sum_{i=1}^{d_{0}}w_{i}^{2}\sum_{i=d_{0}+1}^{d_{0}+d_{1}}w_{i}^{2} + 4\gamma_{0,2}\sum_{i=1}^{d_{0}}w_{i}^{2}\sum_{i=d_{0}+d_{1}+1}^{d}w_{i}^{2} \nonumber \\ &
   + \Biggl. 4\gamma_{1,2}\sum_{i=d_{0}+1}^{d_{0}+d_{1}}w_{i}^{2}\sum_{i=d_{0}+d_{1}+1}^{d}w_{i}^{2} 
\Biggr] ,
\end{align}
where we denote for short
\begin{align} \label{eq:gamma} \notag
\gamma_{0,1} & \coloneqq\rho^{2}\nu_{0}\nu_{1}+(1-\rho)^{2}\nu_{0}\nu_{2}+\rho(1-\rho)\nu_{0}\left(\nu_{1}+\nu_{2}\right) ,\\ \notag
\gamma_{0,2} & \coloneqq\rho^{2}\nu_{0}\nu_{2}+(1-\rho)^{2}\nu_{0}\nu_{1}+\rho(1-\rho)\nu_{0}\left(\nu_{1}+\nu_{2}\right) ,\\
\gamma_{1,2} & \coloneqq \rho^{2}\nu_{1}\nu_{2}+(1-\rho)^{2}\nu_{1}\nu_{2}+\frac{1}{2}\rho(1-\rho)\left(\nu_{1}+\nu_{2}\right)^{2}.
\end{align}

Finally, due to symmetry, at the optimal solution we have 
\begin{align}
w_{i}=\begin{cases}
u_0 & 0\leq i\leq d_{0}\\
u_1 & d_{0}+1\leq i\leq d_{0}+d_{1}\\
u_2 & d_{0}+d_{1}+1\leq i\leq d,
\end{cases}
\end{align}
and by combining these results, we get
\begin{align}
\mathbb{E}\left[\widetilde{\ell}\left(z_{i},z_{j},y_{ij};d_{g}\right)\right] \notag & =d_{0}u_{0}^{4}\left(8\nu_{z}^{2}+8\nu_{z}\nu_{0}+4\nu_{0}^{2}+2\nu_{0}^{2}d_{0}\right)\\ \notag
 & +d_{1}u_{1}^{4}\left(8\nu_{z}^{2}+8\nu_{z}\alpha_{1}+2\beta_{1}+\beta_{1}d_{1}\right)\\ \notag
 & +d_{2}u_{2}^{4}\left(8\nu_{z}^{2}+8\nu_{z}\alpha_{2}+2\beta_{2}+\beta_{2}d_{2}\right)\\\notag
 & -2d_{0}u_{0}^{2}\left(\nu_{0}+\nu_{z}\right)-2d_{1}u_{1}^{2}\left(\alpha_{1}+\nu_{z}\right)-2d_{2}u_{2}^{2}\left(\alpha_{2}+\nu_{z}\right)\\\notag
 & +4\nu_{z}^{2}\left(d_{0}u_{0}^{2}+d_{1}u_{1}^{2}+d_{2}u_{2}^{2}\right)^{2}+\frac{1}{2}m\\ \notag
 & +4\nu_{z}\left(d_{0}u_{0}^{2}+d_{1}u_{1}^{2}+d_{2}u_{2}^{2}\right)\left[\nu_{0}d_{0}u_{0}^{2}+\alpha_{1}d_{1}u_{1}^{2}+\alpha_{2}d_{2}u_{2}^{2}\right]\\ 
 & +4\gamma_{0,1}d_{0}d_{1}u_{0}^{2}u_{1}^{2}+4\gamma_{0,2}d_{0}d_{2}u_{0}^{2}u_{2}^{2}+4\gamma_{1,2}d_{1}d_{2}u_{1}^{2}u_{2}^{2}. 
\end{align}

\subsection{Analysis of the Optimal Solution (Proof of Proposition \ref{prop:ERM_fail})} \label{sec:proof_ERM_fails}

Proposition \ref{prop:ERM_fail} shows that when $d_1$ and $d_2$ are relatively similar, the optimal solution on the training distribution, assigns more weight to components with high variance in the training data than to those with high variance in the shifted test distribution.

We begin by defining the required condition on $d_1$ and $d_2$. 
Denote 
\begin{align} \label{eq:psi_eta} \notag
\psi_{1} & \coloneqq2\nu_{z}^{2}+2\nu_{z}\alpha_{1}+\beta_{1}\\ \notag
\psi_{2} & \coloneqq2\nu_{z}^{2}+2\nu_{z}\alpha_{2}+\beta_{2}\\\notag
\eta_{01} & \coloneqq4\nu_{z}^{2}+2\nu_{z}\left(\alpha_{1}+\nu_{0}\right)+2\gamma_{0,1}\\\notag
\eta_{02} & \coloneqq4\nu_{z}^{2}+2\nu_{z}\left(\alpha_{2}+\nu_{0}\right)+2\gamma_{0,2}\\
\eta_{12} & \coloneqq4\nu_{z}^{2}+2\nu_{z}\left(\alpha_{1}+\alpha_{2}\right)+2\gamma_{1,2}.
\end{align}
Then for $\alpha, \beta, \gamma$ values as in equations \ref{eq:alpha_beta} and \ref{eq:gamma}, we define
\begin{equation} \label{eq:prop1_cond}
h_{l}\left(\rho,\nu_{z},\nu_{0},\nu_{1},\nu_{2} \right) \coloneqq \frac{\psi_{1}}{\psi_{2}}\frac{\left(\alpha_{2}+\nu_{z}\right)}{\left(\alpha_{1}+\nu_{z}\right)},\quad h_{u}\left(\rho,\nu_{z},\nu_{0},\nu_{1},\nu_{2} \right)\coloneqq\frac{\psi_{1}}{\psi_{2}}\frac{\eta_{02}}{\eta_{01}},
\end{equation}
and the corresponding condition  
\begin{equation} h_{l}\left(\rho,\nu_{z},\nu_{0},\nu_{1},\nu_{2} \right) < \frac{d_ 2 + 2}{d_1 + 2}< h_{u}\left(\rho,\nu_{z},\nu_{0},\nu_{1},\nu_{2} \right).
\end{equation}

While this condition is sufficient, it is not necessary.Values of $\rho,\nu_{z},\nu_{0},\nu_{1},\nu_{2}$ and $d_1, d_2$ that satisfy \ref{eq:prop1_cond} provide an example requiring only a simple analysis, without a full characterization of the optimal solution, for the failure of optimization over the training distribution. However, such failures can occur for additional parameter values, and the full characterization is provided in Appendix \ref{sec:full_sol}.

\begin{proof}
    Without loss of generality assume m=1. 
    Then, 
\begin{align}
\notag
\frac{\partial\mathbb{E}\left[\widetilde{\ell}\left(z_{i},z_{j},y_{ij};d_{g}\right)\right]}{\partial u_{0}^{2}} & =2d_{0}u_{0}^{2}\left(8\nu_{z}^{2}+8\nu_{z}\nu_{0}+4\nu_{0}^{2}+2\nu_{0}^{2}d_{0}\right)\\ \notag
 & -2d_{0}\left(\nu_{0}+\nu_{z}\right)\\ \notag
 & +8d_{0}\nu_{z}^{2}\left(d_{0}u_{0}^{2}+d_{1}u_{1}^{2}+d_{2}u_{2}^{2}\right)\\ \notag
 & +4d_{0}\nu_{z}\left(\nu_{0}d_{0}u_{0}^{2}+\alpha_{1}d_{1}u_{1}^{2}+\alpha_{2}d_{2}u_{2}^{2}\right)\\ \notag
 & +4d_{0}\nu_{z}\nu_{0}\left(d_{0}u_{0}^{2}+d_{1}u_{1}^{2}+d_{2}u_{2}^{2}\right)\\ 
 & +4\gamma_{0,1}d_{0}d_{1}u_{1}^{2}+4\gamma_{0,2}d_{0}d_{2}u_{2}^{2} 
\end{align}
and by setting the partial derivative to zero we get
\begin{align} \notag
2u_{0}^{2}\left(2+d_{0}\right)\left(2\nu_{z}^{2}+2\nu_{z}\nu_{0}+\nu_{0}^{2}\right)= & 2d_{1}u_{1}^{2}\left(2\nu_{z}^{2}+\nu_{z}\left(\alpha_{1}+\nu_{0}\right)+\gamma_{0,1}\right)\\
+ & 2d_{2}u_{2}^{2}\left(2\nu_{z}^{2}+\nu_{z}\left(\alpha_{2}+\nu_{0}\right)+\gamma_{0,2}\right)-\left(\nu_{0}+\nu_{z}\right). 
\end{align}
Therefore,
\begin{equation} \label{eq:u0}
u_{0}^{2}=\frac{\nu_{0}+\nu_{z}-\eta_{01}d_{1}u_{1}^{2}-\eta_{02}d_{2}u_{2}^{2}}{2\left(2+d_{0}\right)\left(2\nu_{z}^{2}+2\nu_{z}\nu_{0}+\nu_{0}^{2}\right)}.
\end{equation}
and similarly
\begin{align}
u_{1}^{2} & =\frac{\left(\alpha_{1}+\nu_{z}\right)-\eta_{01}d_{0}u_{0}^{2}-\eta_{12}d_{2}u_{2}^{2}}{2\left(2+d_{1}\right)\left(2\nu_{z}^{2}+2\nu_{z}\alpha_{1}+\beta_{1}\right)}\\
u_{2}^{2} & =\frac{\left(\alpha_{2}+\nu_{z}\right)-\eta_{02}d_{0}u_{0}^{2}-\eta_{12}d_{1}u_{1}^{2}}{2\left(2+d_{2}\right)\left(2\nu_{z}^{2}+2\nu_{z}\alpha_{2}+\beta_{2}\right)}.
\end{align}
Hence,
\begin{align} \notag
d_{1}u_{1}^{2}-d_{2}u_{2}^{2} & =\frac{\left(2+d_{2}\right)\left(2\nu_{z}^{2}+2\nu_{z}\alpha_{2}+\beta_{2}\right)\left[d_{1}\left(\alpha_{1}+\nu_{z}\right)-\eta_{01}d_{1}d_{0}u_{0}^{2}-\eta_{12}d_{1}d_{2}u_{2}^{2}\right]}{2\left(2+d_{1}\right)\left(2+d_{2}\right)\left(2\nu_{z}^{2}+2\nu_{z}\alpha_{1}+\beta_{1}\right)\left(2\nu_{z}^{2}+2\nu_{z}\alpha_{2}+\beta_{2}\right)}\\
 & -\frac{\left(2+d_{1}\right)\left(2\nu_{z}^{2}+2\nu_{z}\alpha_{1}+\beta_{1}\right)\left[d_{2}\left(\alpha_{2}+\nu_{z}\right)-\eta_{02}d_{2}d_{0}u_{0}^{2}-\eta_{12}d_{1}d_{2}u_{1}^{2}\right]}{2\left(2+d_{1}\right)\left(2+d_{2}\right)\left(2\nu_{z}^{2}+2\nu_{z}\alpha_{1}+\beta_{1}\right)\left(2\nu_{z}^{2}+2\nu_{z}\alpha_{2}+\beta_{2}\right)}.
\end{align} 
Denoting
\begin{align*}
\xi & \notag \coloneqq2\left(2+d_{1}\right)\left(2+d_{2}\right)\left(2\nu_{z}^{2}+2\nu_{z}\alpha_{1}+\beta_{1}\right)\left(2\nu_{z}^{2}+2\nu_{z}\alpha_{2}+\beta_{2}\right)\\
 & =2\left(2+d_{1}\right)\left(2+d_{2}\right)\psi_{1}\psi_{2}
\end{align*}
we have
\begin{align*} \notag
d_{1}u_{1}^{2}\left[1-\frac{1}{\xi}\left(2+d_{1}\right)\psi_{1}\eta_{12}\right]= & d_{2}u_{2}^{2}\left[1-\frac{1}{\xi}\left(2+d_{2}\right)\psi_{2}\eta_{12}\right]\\ \notag
+ & \frac{1}{\xi}\left(2+d_{2}\right)\psi_{2}\left(\alpha_{1}+\nu_{z}\right)-\frac{1}{\xi}\left(2+d_{1}\right)\psi_{1}\left(\alpha_{2}+\nu_{z}\right)\\
+ & d_{0}u_{0}^{2}\left[\frac{1}{\xi}\left(2+d_{1}\right)\psi_{1}\eta_{02}-\frac{1}{\xi}\left(2+d_{2}\right)\psi_{2}\eta_{01}\right]
\end{align*}
and therefore 
\begin{align} \label{eq:diff12} \notag
d_{1}u_{1}^{2}-d_{2}u_{2}^{2}= & d_{2}u_{2}^{2}\left(\frac{1-\frac{1}{\xi}\left(2+d_{2}\right)\psi_{2}\eta_{12}}{1-\frac{1}{\xi}\left(2+d_{1}\right)\psi_{1}\eta_{12}}-1\right)\\ \notag
+ & \frac{1}{2\left(2+d_{1}\right)\left(2+d_{2}\right)\psi_{1}\psi_{2}}\frac{\left(2+d_{2}\right)\psi_{2}\left(\alpha_{1}+\nu_{z}\right)-\left(2+d_{1}\right)\psi_{1}\left(\alpha_{2}+\nu_{z}\right)}{1-\frac{1}{\xi}\left(2+d_{1}\right)\psi_{1}\eta_{12}}\\
+ & d_{0}u_{0}^{2}\frac{1}{2\left(2+d_{1}\right)\left(2+d_{2}\right)\psi_{1}\psi_{2}}\left[\frac{\left(2+d_{1}\right)\psi_{1}\eta_{02}}{1-\frac{1}{\xi}\left(2+d_{1}\right)\psi_{1}\eta_{12}}-\frac{\left(2+d_{2}\right)\psi_{2}\eta_{01}}{1-\frac{1}{\xi}\left(2+d_{1}\right)\psi_{1}\eta_{12}}\right].
\end{align}

Denote
\begin{align}
\Delta= & \left(2+d_{1}\right)\left[d_{2}u_{2}^{2}\eta_{12}\psi_{1}-\left(\alpha_{2}+\nu_{z}\right)\psi_{1}+d_{0}u_{0}^{2}\psi_{1}\eta_{02}\right] \nonumber \\ 
- & \left(2+d_{2}\right)\left[d_{2}u_{2}^{2}\eta_{12}\psi_{2}-\left(\alpha_{1}+\nu_{z}\right)\psi_{2}+d_{0}u_{0}^{2}\psi_{2}\eta_{01}\right],
\end{align}
and thus 
\begin{align}
d_{1}u_{1}^{2}-d_{2}u_{2}^{2}= & \frac{1}{2\left(2+d_{1}\right)\left(2+d_{2}\right)\psi_{1}\psi_{2}}\frac{1}{1-\frac{1}{\xi}\left(2+d_{1}\right)\psi_{1}\eta_{12}}\Delta.
\end{align}

Note that for $d_{1},d_{2}$ such that
\[
\begin{cases}
\left(2+d_{1}\right)\psi_{1}-\left(2+d_{2}\right)\psi_{2}>0 & \Rightarrow\frac{d_ 2 + 2}{d_1 + 2}<\frac{\psi_{1}}{\psi_{2}}\\
\left(2+d_{2}\right)\left(\alpha_{1}+\nu_{z}\right)\psi_{2}-\left(2+d_{1}\right)\left(\alpha_{2}+\nu_{z}\right)\psi_{1}>0 & \Rightarrow\frac{d_ 2 + 2}{d_1 + 2}>\frac{\psi_{1}}{\psi_{2}}\frac{\left(\alpha_{2}+\nu_{z}\right)}{\left(\alpha_{1}+\nu_{z}\right)}\\
\left(2+d_{1}\right)\psi_{1}\eta_{02}-\left(2+d_{2}\right)\psi_{2}\eta_{01}>0 & \Rightarrow\frac{d_ 2 + 2}{d_1 + 2}<\frac{\psi_{1}}{\psi_{2}}\frac{\eta_{02}}{\eta_{01}}
\end{cases}
\]
we have $\Delta>0$. Since $\frac{\eta_{02}}{\eta_{01}}<1$, this
reduces to the last two conditions and therefore, in particular for 
\begin{equation}
    \frac{\psi_{1}}{\psi_{2}}\frac{\left(\alpha_{2}+\nu_{z}\right)}{\left(\alpha_{1}+\nu_{z}\right)}<\frac{d_ 2 + 2}{d_1 + 2} <\frac{\psi_{1}}{\psi_{2}}\frac{\eta_{02}}{\eta_{01}}
\end{equation}
we have $\Delta>0$. 
Additionally, note that 
\begin{align}
1-\frac{1}{\xi}\left(2+d_{1}\right)\psi_{1}\eta_{12} & =1-\frac{\eta_{12}}{2\left(2+d_{1}\right)\left(2+d_{2}\right)\psi_{1}\psi_{2}}\left(2+d_{1}\right)\psi_{1}=\frac{2\left(2+d_{2}\right)\psi_{2}-\eta_{12}}{2\left(2+d_{2}\right)\psi_{2}}
\end{align}
and thus $1-\frac{1}{\xi}\left(2+d_{1}\right)\psi_{1}\eta_{12}>0$
iff 
\begin{equation}
d_{2} + 2>\frac{1}{2}\frac{\eta_{12}}{\psi_{2}}.
\end{equation}

Combining these conditions reduces to 
\begin{equation}
\frac{\psi_{1}}{\psi_{2}}\frac{\left(\alpha_{2}+\nu_{z}\right)}{\left(\alpha_{1}+\nu_{z}\right)}<\frac{d_ 2 + 2}{d_1 + 2} <\frac{\psi_{1}}{\psi_{2}}\frac{\eta_{02}}{\eta_{01}},
\end{equation}
and therefore, for $\ensuremath{\nu_{z},\nu_{0},\nu_{1},\nu_{2},d_{1},d_{2}}$
satisfying
\begin{equation}
\frac{\psi_{1}}{\psi_{2}}\frac{\left(\alpha_{2}+\nu_{z}\right)}{\left(\alpha_{1}+\nu_{z}\right)}<\frac{d_ 2 + 2}{d_1 + 2} <\frac{\psi_{1}}{\psi_{2}}\frac{\eta_{02}}{\eta_{01}}.
\end{equation}
we have $d_{1}u_{1}^{2}-d_{2}u_{2}^{2}>0$.\footnote{Similarly, the condition obtained for $1-\frac{1}{\xi}\left(2+d_{1}\right)\psi_{1}\eta_{12}<0$
and $\Delta<0$ is $\frac{\psi_{1}}{\psi_{2}}\left(2+d_{1}\right)< 2+d_{2}<\frac{\psi_{1}}{\psi_{2}}\frac{\left(\alpha_{2}+\nu_{z}\right)}{\left(\alpha_{1}+\nu_{z}\right)}\left(2+d_{1}\right)$,
which cannot be achieved since $\alpha_{2}<\alpha_{1}$. }
\end{proof}

\subsection{Explicit Expression for the Optimal Representation}
\label{sec:full_sol}
In order to derive the optimal representation, we differentiate the expected loss with respect to the squared values in the diagonal of $W$, that is, $w_i^2$:
\begin{align}
 \frac{\partial}{\partial\left(w_{i}^{2}\right)}\tr\left(\Sigma^{2}\right)
 &=8\nu_{z}^{2}w_{i}^{2} \\
 \frac{\partial}{\partial\left(w_{i}^{2}\right)}\tr^2\left(\Sigma\right)
 &=8\nu_{z}^{2}\sum_{j=1}^{d}w_{j}^{2} \\
 \frac{\partial}{\partial\left(w_{i}^{2}\right)}\expect_{y=0}\left[\tr\left(\Sigma\Sigma_{\mu}\right)\right]
 &=\begin{cases}
8\nu_{z}\nu_{0}w_{i}^{2}, & 1\leq i\leq d_{0}\\
8\nu_{z}\alpha_{1}w_{i}^{2}, & d_{0}+1\leq i\leq d_{0}+d_{1}\\
8\nu_{z}\alpha_{2}w_{i}^{2}, & d_{0}+d_{1}+1\leq i\leq d
\end{cases} \\
 & \!\!\!\!\!\!\!\!\!\!\!\!\!\!\!\!\!\!\!\!\!\!\!\!\!\!\!\!\!\!\!\!\!\!\!\!\!\!\!\!\!\!\!\!\!\!\!\!\!\!\!\!\!\!\!\!\!\!\!\!\ \frac{\partial\big[\tr(\Sigma) \expect_{y=0}[\tr\Sigma_\mu]\big]}{\partial\left(w_{i}^{2}\right)} = 
\\ & \!\!\!\!\!\!\!\!\!\!\!\!\!\!\!\!\!\!\!\!\!\!\!\!\!\!\!\!\!\!\!\!\!\!\!\!\!\!\!\!\!\!\!\!\!\!\!\!\!\!\!\!\!\!\!\!\!\!\!\!\ \begin{cases}
4\nu_{z}\left[2\nu_{0}\sum\limits_{j=1}^{d_{0}}w_{j}^{2}+\left(\alpha_{1}+\nu_{0}\right)\sum\limits_{j=d_{0}+1}^{d_{0}+d_{1}}w_{j}^{2}+\left(\alpha_{2}+\nu_{0}\right)\sum\limits_{j=d_{0}+d_{1}+1}^{d}w_{j}^{2}\right] & 1\leq i\leq d_{0}\\
4\nu_{z}\left[\left(\nu_{0}+\alpha_{1}\right)\sum\limits_{j=1}^{d_{0}}w_{j}^{2}+2\alpha_{1}\sum\limits_{j=d_{0}+1}^{d_{0}+d_{1}}w_{j}^{2}+\left(\alpha_{2}+\alpha_{1}\right)\sum\limits_{j=d_{0}+d_{1}+1}^{d}w_{j}^{2}\right] & d_{0}+1\leq i\leq d_{0}+d_{1}\\
4\nu_{z}\left[\left(\nu_{0}+\alpha_{2}\right)\sum\limits_{j=1}^{d_{0}}w_{j}^{2}+\left(\alpha_{1}+\alpha_{2}\right)\sum\limits_{j=d_{0}+1}^{d_{0}+d_{1}}w_{j}^{2}+2\alpha_{2}\sum\limits_{j=d_{0}+d_{1}+1}^{d}w_{j}^{2}\right] & d_{0}+d_{1}+1\leq i\leq d
\end{cases} \\
 \frac{\partial}{\partial\left(w_{i}^{2}\right)}\expect_{y=0}\left[\tr\left(\Sigma_{\mu}^{2}\right)\right]
 &=\begin{cases}
8\nu_{0}^{2}w_{i}^{2} & 1\leq i\leq d_{0}\\
4\beta_{1}w_{i}^{2} & d_{0}+1\leq i\leq d_{0}+d_{1}\\
4\beta_{2}w_{i}^{2} & d_{0}+d_{1}+1\leq i\leq d
\end{cases} \\
& \!\!\!\!\!\!\!\!\!\!\!\!\!\!\!\!\!\!\!\!\!\!\!\!\!\!\!\!\!\!\!\!\!\!\!\!\!\!\!\!\!\!\!\!\!\!\!\!\!\!\!\!\!\!\!\!\!\!\!\!\  \frac{\partial}{\partial \left(w_{i}^{2}\right)}
\expect_{y=0}\left[\tr^2\left(\Sigma_{\mu}\right)\right] = \\
 & \!\!\!\!\!\!\!\!\!\!\!\!\!\!\!\!\!\!\!\!\!\!\!\!\!\!\!\!\!\!\!\!\!\!\!\!\!\!\!\!\!\!\!\!\!\!\!\!\!\!\!\!\!\!\!\!\!\!\!\!\  \begin{cases}
8\nu_{0}^{2}\sum\limits_{j=1}^{d_{0}}w_{j}^{2}+8\gamma_{0,1}\sum\limits_{j=d_{0}+1}^{d_{0}+d_{1}}w_{j}^{2}+8\gamma_{0,2}\sum\limits_{j=d_{0}+d_{1}+1}^{d}w_{j}^{2} & 1\leq i\leq d_{0}\\
8\gamma_{0,1}\sum\limits_{j=1}^{d_{0}}w_{j}^{2}+4\beta_{1}\sum\limits_{j=d_{0}+1}^{d_{0}+d_{1}}w_{j}^{2}+8\gamma_{1,2}\sum\limits_{j=d_{0}+d_{1}+1}^{d}w_{j}^{2} & d_{0}+1\leq i\leq d_{0}+d_{1}\\
8\gamma_{0,2}\sum\limits_{j=1}^{d_{0}}w_{j}^{2}+8\gamma_{1,2}\sum\limits_{j=d_{0}+1}^{d_{0}+d_{1}}w_{j}^{2}+4\beta_{2}\sum\limits_{j=d_{0}+d_{1}+1}^{d}w_{j}^{2} & d_{0}+d_{1}+1\leq i\leq d
\end{cases}
\end{align}

Combining these results, we get for $1\leq i\leq d_{0}$
\begin{align*}
\partial_0 \coloneqq  \frac{\partial}{\partial\left(w_{i}^{2}\right)}\widetilde{\ell}\left(z_{i},z_{j},y_{ij};d_{g}\right) & =\frac{1}{2}\left[2\cdot8\nu_{z}^{2}w_{i}^{2}+8\nu_{z}^{2}\sum_{j=1}^{d}w_{j}^{2}\right]-m\left[2\left(\nu_{0}+\nu_{z}\right)\right]\\
 & +8\nu_{z}^{2}w_{i}^{2}+4\nu_{z}^{2}\sum_{j=1}^{d}w_{j}^{2}+2\cdot8\nu_{z}\nu_{0}w_{i}^{2}\\
 & +4\nu_{z}\left[2\nu_{0}\sum_{j=1}^{d_{0}}w_{j}^{2}+\left(\alpha_{1}+\nu_{0}\right)\sum_{j=d_{0}+1}^{d_{0}+d_{1}}w_{j}^{2}+\left(\alpha_{2}+\nu_{0}\right)\sum_{j=d_{0}+d_{1}+1}^{d}w_{j}^{2}\right]\\
 & +8\nu_{0}^{2}w_{i}^{2}+\frac{1}{2}\left[8\nu_{0}^{2}\sum_{j=1}^{d_{0}}w_{j}^{2}+8\gamma_{0,1}\sum_{j=d_{0}+1}^{d_{0}+d_{1}}w_{j}^{2}+8\gamma_{0,2}\sum_{j=d_{0}+d_{1}+1}^{d}w_{j}^{2}\right] ,
\end{align*}
for $d_{0}+1\leq i\leq d_{0}+d_{1}$
\begin{align*}
\partial_1 \coloneqq  \frac{\partial}{\partial\left(w_{i}^{2}\right)}\widetilde{\ell}\left(z_{i},z_{j},y_{ij};d_{g}\right) & =\frac{1}{2}\left[2\cdot8\nu_{z}^{2}w_{i}^{2}+8\nu_{z}^{2}\sum_{j=1}^{d}w_{j}^{2}\right]-m\left[2\left(\alpha_{1}+\nu_{z}\right)\right]\\
 & +8\nu_{z}^{2}w_{i}^{2}+4\nu_{z}^{2}\sum_{j=1}^{d}w_{j}^{2}+2\cdot8\nu_{z}\alpha_{1}w_{i}^{2}\\
 & +4\nu_{z}\left[\left(\nu_{0}+\alpha_{1}\right)\sum_{j=1}^{d_{0}}w_{j}^{2}+2\alpha_{1}\sum_{j=d_{0}+1}^{d_{0}+d_{1}}w_{j}^{2}+\left(\alpha_{2}+\alpha_{1}\right)\sum_{j=d_{0}+d_{1}+1}^{d}w_{j}^{2}\right]\\
 & +4\beta_{1}w_{i}^{2}+\frac{1}{2}\left[8\gamma_{0,1}\sum_{j=1}^{d_{0}}w_{j}^{2}+4\beta_{1}\sum_{j=d_{0}+1}^{d_{0}+d_{1}}w_{j}^{2}+8\gamma_{1,2}\sum_{j=d_{0}+d_{1}+1}^{d}w_{j}^{2}\right] ,
\end{align*}
and similarly for $d_{0}+d_{1}+1\leq i\leq d$
\begin{align*}
\partial_2 \coloneqq \frac{\partial}{\partial\left(w_{i}^{2}\right)}\widetilde{\ell}\left(z_{i},z_{j},y_{ij};d_{g}\right) & =\frac{1}{2}\left[2\cdot8\nu_{z}^{2}w_{i}^{2}+8\nu_{z}^{2}\sum_{j=1}^{d}w_{j}^{2}\right]-m\left[2\left(\alpha_{2}+\nu_{z}\right)\right]\\
 & +8\nu_{z}^{2}w_{i}^{2}+4\nu_{z}^{2}\sum_{j=1}^{d}w_{j}^{2}+2\cdot8\nu_{z}\alpha_{2}w_{i}^{2}\\
 & +4\nu_{z}\left[\left(\nu_{0}+\alpha_{2}\right)\sum_{j=1}^{d_{0}}w_{j}^{2}+\left(\alpha_{1}+\alpha_{2}\right)\sum_{j=d_{0}+1}^{d_{0}+d_{1}}w_{j}^{2}+2\alpha_{2}\sum_{j=d_{0}+d_{1}+1}^{d}w_{j}^{2}\right]\\
 & +4\beta_{2}w_{i}^{2}+\frac{1}{2}\left[8\gamma_{0,2}\sum_{j=1}^{d_{0}}w_{j}^{2}+8\gamma_{1,2}\sum_{j=d_{0}+1}^{d_{0}+d_{1}}w_{j}^{2}+4\beta_{2}\sum_{j=d_{0}+d_{1}+1}^{d}w_{j}^{2}\right].
\end{align*}

Thus, we can write for the symmetric solution  
\begin{align}
\partial_{0} & =-2m\left(\nu_{0}+\nu_{z}\right)+u_0^{2}G_{0,0}+u_1^{2}G_{0,1}+u_2^{2}G_{0,2} ,\\
\partial_{1} & =-2m\left(\alpha_{1}+\nu_{z}\right)+u_0^{2}G_{1,0}+u_1^{2}G_{1,1}+u_2^{2}G_{1,2} ,\\
\partial_{2} & =-2m\left(\alpha_{2}+\nu_{z}\right)+u_0^{2}G_{2,0}+u_1^{2}G_{2,1}+u_2^{2}G_{2,2} ,
\end{align}

where 
\begin{align*}
G_{0,0} & =16\nu_{z}^{2}+8\nu_{z}^{2}d_{0}+16\nu_{z}\nu_{0}+8\nu_{z}\nu_{0}d_{0}+8\nu_{0}^{2}+4v_{0}^{2}d_{0}\\
G_{0,1} & =8\nu_{z}^{2}d_{1}+4\nu_{z}\left(\alpha_{1}+\nu_{0}\right)d_{1}+4\gamma_{0,1}d_{1}\\
G_{0,2} & =8\nu_{z}^{2}d_{2}+4\nu_{z}\left(\alpha_{2}+\nu_{0}\right)d_{2}+4\gamma_{0,2}d_{2}\\
G_{1,0} & =8\nu_{z}^{2}d_{0}+4\nu_{z}\left(\nu_{0}+\alpha_{1}\right)d_{0}+4\gamma_{0,1}d_{0}\\
G_{1,1} & =16\nu_{z}^{2}+8\nu_{z}^{2}d_{1}+16\nu_{z}\alpha_{1}+8\nu_{z}\alpha_{1}d_{1}+4\beta_{1}+4\beta_{1}d_{1}\\
G_{1,2} & =8\nu_{z}^{2}d_{2}+4\nu_{z}\left(\alpha_{2}+\alpha_{1}\right)d_{2}+4\gamma_{1,2}d_{2}\\
G_{2,0} & =8\nu_{z}^{2}d_{0}+4\nu_{z}\left(\nu_{0}+\alpha_{2}\right)d_{0}+4\gamma_{0,2}d_{0}\\
G_{2,1} & =8\nu_{z}^{2}d_{1}+4\nu_{z}\left(\alpha_{1}+\alpha_{2}\right)d_{1}+4\gamma_{1,2}d_{1}\\
G_{2,2} & =16\nu_{z}^{2}+8\nu_{z}^{2}d_{2}+16\nu_{z}\alpha_{2}+8\nu_{z}\alpha_{2}d_{2}+4\beta_{2}+4\beta_{2}d_{2}.
\end{align*}

Therefore, the optimal representation is given by the solution to the following set of linear equations:
\begin{align}
\left(\begin{array}{c}
u_0^2\\
u_1^2\\
u_2^2
\end{array}\right)
=2m\,G^{-1} \left(\begin{array}{c}
\nu_{0}+\nu_{z}\\
\alpha_{1}+\nu_{z}\\
\alpha_{2}+\nu_{z}
\end{array}\right) ,
\end{align} 

where
\begin{align}
G=\left(\begin{array}{ccc}
G_{0,0} & G_{0,1} & G_{0,2}\\
G_{1,0} & G_{1,1} & G_{1,2}\\
G_{2,0} & G_{2,1} & G_{2,2}
\end{array}\right).
\end{align}

\section{Comparison to OOD Environment Balancing Methods} \label{sec:comp_OOD}

Previous methods in the field of OOD generalization (see \S \ref{sec:related_work_OOD}) exhibit several key differences compared to our setting: (i) They address closed-world classification, whereas in zero-shot learning, new classes are encountered. (ii) The presumed shift is typically in the conditional distribution of the data given the class (e.g., the background given the class being a cow or a camel), whereas we consider shifts in the class distribution $P(c)$. (iii) Existing methods often assume that training data comes from various data environments, providing explicit information about how the distribution might shift, while we assume the attribute $A$ causing the shift is unknown.

Despite these differences, in this work we recast class distribution shifts in zero-shot learning into environment balancing OOD setting, by making the following observations.
First, when posed as verification methods, zero-shot classifiers in fact perform a binary (closed-world) classification task, predicting whether a pair of data points $x_{ij}\coloneqq(z_i, z_j)$ belong to the same class $y_{ij}=\mathbbm{1}_{c_i=c_j}$. 

Note that the distribution of possible pairs $x_{ij} = (z_i, z_j)$ given the label $y_{ij}$ changes with variations in class attribute probabilities, and therefore across synthetic environments $S$.
Thus, in this formulation 
the shift occurs in the conditional distribution of the data given the class $p(x_{ij} \vert y_{ij})$.

Another distinction lies in data availability:
in the setting of closed-world OOD environment balancing methods, a main drawback is the challenge of securing a sufficient number of diverse training environments. This is essential to ensure  that a representation performing well on observed environments, will likely perform similarly on unobserved ones. In contrast, our framework allows for the construction of many synthetic environments via sampling.

\section{Additional Empirical Results} \label{sup:results}
\subsection{Additional Simulation Results} \label{sup:simulations}

\paragraph{Simulations}

Exact mean and standard deviations matching Figure \ref{Fig:simulation_auc_curves} are provided in table \ref{Tab:simulations}. 

\begin{table}
\caption{Simulation results. For each mixture ratio we report the mean AUC and the standard deviation across 10 repetitions of the experiment.  Results are reported for in-distribution scenario ($P_{C}$), and class distribution shift ($Q_{C}$). Best result is marked in bold.}
\label{Tab:simulations}
\centering
        \begin{tabular}{lcccc} 
        \toprule
        & & $\rho = 0.05$ & $\rho = 0.1$ & $\rho = 0.3$ \\ 
        \midrule 
        \multirow{5}{*}{In Distribution} 
        & ERM & 0.948$\pm$0.013 & 0.948$\pm$0.007 & 0.913$\pm$0.017 \\ 
        & Hier & 0.945$\pm$0.013 & \textbf{0.949}$\pm$0.010 & \textbf{0.917}$\pm$0.016 \\ 
        & IRM & 0.945$\pm$0.013 & 0.947$\pm$0.009 & 0.909$\pm$0.018 \\ 
        & CLOvE & 0.944$\pm$0.008 & 0.949$\pm$0.011 & 0.911$\pm$0.020 \\ 
        & VarREx & 0.949$\pm$0.013 & 0.948$\pm$0.009 & 0.910$\pm$0.022 \\ 
        & VarAUC & \textbf{0.950}$\pm$0.017 & 0.947$\pm$0.008 & 0.912$\pm$0.022 \\  
        \midrule
        \multirow{5}{*}{Distribution Shift} 
        & ERM & 0.731$\pm$0.007 & 0.808$\pm$0.015 & \textbf{0.883}$\pm$0.014 \\
        & Hier & 0.727 $\pm$ 0.009 & 0.810 $\pm$ 0.014 & 0.882 $\pm$ 0.020 \\  
        & IRM & 0.724$\pm$0.017 & 0.806$\pm$0.019 & 0.880$\pm$0.023 \\ 
        & CLOvE & 0.745$\pm$0.020 & 0.807$\pm$0.018 & 0.878$\pm$0.020 \\ 
        & VarREx & 0.729$\pm$0.005 & 0.811$\pm$0.018 & 0.880$\pm$0.026 \\ 
        & VarAUC & \textbf{0.767}$\pm$0.008 & \textbf{0.838}$\pm$0.019 & 0.881 $\pm 0.024$  \\ 
        \bottomrule
    \end{tabular}
\end{table}

AUC progress during training iterations and feature importance results for the majority class proportion of $\rho=0.1$ were shown in the main text. Here, we provide analogous results for $\rho=0.05$ and $\rho=0.3$. These are summarized in Figure \ref{sup:simulations-agg}.

For $\rho=0.05$ the convergence results are similar to those obtained for $\rho=0.1$ --  under distribution-shift the two variance based
methods show significantly better results compared
to other approaches. Our
algorithm with the VarREx penalty achieves high AUC values more quickly than the VarAUC penalty, but the VarAUC penalty attains higher accuracy overall. The CLOvE penalty achieves improvement over ERM, but smaller compared to the variance based methods. IRM converges to the same AUC as ERM.
In contrast, on in-distribution data all methods perform well. 

For $\rho=0.3$ the distribution shift is milder and therefore ERM performs very well (0.902 AUC is achieved on distribution shift scenario compared to 0.932 on in-distribution setting). Therefore
encouragement of similar performance across different data subsets does not benefit the learning process. Slightly better result is achieved with VarREx penalty (0.911).

The analysis of feature importance for $\rho=0.05$ yields results similar to those for $\rho=0.1$. At $\rho=0.3$ the analysis remains mostly unchanged, except that VarREx assigns higher importance to features corresponding to $\nu_0$ (0-5) compared to VarAUC, while in more extreme distribution shifts VarAUC assigns higher importance to the shared features.

\subsection{Additional Representation Sizes, Noise Ratios and Positive Proportions}
In \S \ref{simulations} we explored varying values of $\rho$ in a setting where $\nu^{+} = 2$, $\nu^{-} = 0.1$ ($\frac{\nu^{+}}{\nu^{-}}=20$).
We now focus on the case of $\rho = 0.1$ and examine  additional representation sizes $p$, and noise ratios ($\frac{\nu^{+}}{\nu^{-}} \in \{10, 40\}$). Additionally, we examine the original setting where $p=16$ and $\nu^{+} = 2$, $\nu^{-} = 0.1$, with varying proportions of positive and negative examples.

The results in Figure \ref{fig:other_simulations}  show that in all the additional settings our methods provides statistically significant improvement over the baseline. FDR adjusted p-values for multiple comparisons are provided in Table \ref{tbl:other_simulations}.

\begin{figure}
\vskip 0.2in
\begin{center}
\centerline{\includegraphics[width=0.8\columnwidth]{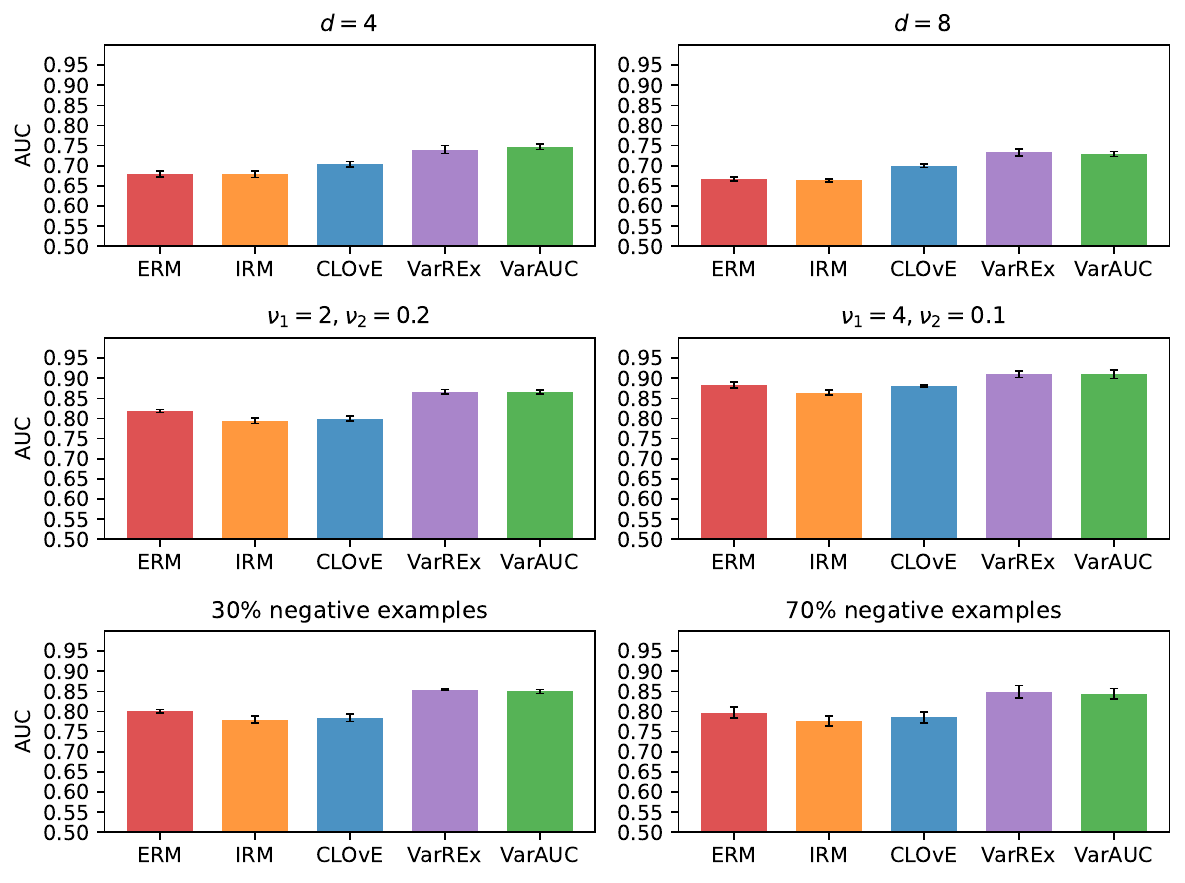}}
\caption{Additional simulation results. Top row: Additional dimensions of the representation. Middle row: additional rations of the attribute variances. Bottom row: unbalanced sets of positive and negative examples.
Bars show mean AUC values on the test set across 5 repetitions of the experiment, whiskers show $\pm$ standard deviation.}
\label{fig:other_simulations}
\end{center}
\end{figure}

\begin{table}[!ht]
    \caption{FDR adjusted p-values for the results reported in Figure \ref{fig:other_simulations}}
    \vskip 0.2in
    \label{tbl:other_simulations}
    \centering
    \begin{tabular}{clllll}
    \toprule
        Experiment & IRM & CLOvE & VarREx & VarAUC \\ \midrule
        $p=4$ & 0.7339 & 0.0112 & 0.0003 & 0.0001 \\ 
        $p=8$ & 0.8552 & 0.0005 & 0.0003 & 0.0001 \\ 
        $\nu_1=2, \nu_2=0.2$ & 0.9995 & 0.9995 & 0.0002 & <0.0001 \\ 
        $\nu_1=4, \nu_2=0.1$ & 0.9989 & 0.9971 & 0.0041 & 0.0041 \\ 
        30\% negative & 0.9939 & 0.9939 & <0.0001 & <0.0001 \\ 
        70\% negative & 1.0 & 1.0 & <0.0001 & 0.0002 \\
    \bottomrule
    \end{tabular}
\end{table}

\paragraph{Experiments}

In Table \ref{experiments_table}, we provide the means and standard deviations for the experiments detailed in \S \ref{real_data}. Additionally, Table \ref{p_values} presents the adjusted p-values for assessing the performance increase over the ERM baseline achieved by our algorithm with the explored penalties.

\begin{table}[ht]
\caption{Experimental results. Mean and standard deviation of AUC values over 5 repetitions are reported for in distribution scenario ($P_{C}$), and class distribution shift ($Q_{C}$). Best result is marked in bold.} 
\label{experiments_table}
\vskip 0.1in
\begin{small}
\begin{center}
\begin{sc}
\begin{tabular}{lccc}
        \toprule
        & & {\makecell{In  Distribution}} & {\makecell{Distribution Shift}} \\ 
        \midrule
        \multirow{5}{*}{CelebA}  
        & \textbf{ERM} & 0.826 $\pm$ 0.001 & 0.666 $\pm$ 0.001 \\  
        & \textbf{IRM} & 0.843 $\pm$ 0.009 & 0.659 $\pm$ 0.087 \\
        & \textbf{CLOvE} & \textbf{0.853} $\pm$ 0.002 & 0.677 $\pm$ 0.012 \\
        & \textbf{VarREx} & 0.834 $\pm$ 0.002 & 0.676 $\pm$ 0.004 \\
        & \textbf{VarAUC} & 0.836 $\pm$ 0.002 & \textbf{0.697} $\pm$ 0.027 \\
        \midrule 
        \multirow{5}{*}{ETHEC} 
        & \textbf{ERM} & 0.869 $\pm$ 0.004 & 0.786 $\pm$ 0.030 \\
        & \textbf{IRM} & 0.879 $\pm$ 0.004 & 0.795 $\pm$ 0.034 \\
        & \textbf{CLOvE} & \textbf{0.888} $\pm$ 0.004 & 0.800 $\pm$ 0.040 \\
        & \textbf{VarREx} & 0.877 $\pm$ 0.007 & 0.805 $\pm$ 0.033 \\
        & \textbf{VarAUC} & 0.872 $\pm$ 0.004 & \textbf{0.838} $\pm$ 0.049 \\
        \bottomrule                         
    \end{tabular}
\end{sc}
\end{center}
\end{small}
\vskip -0.1in
\end{table}

\begin{table}[ht]
\caption{Adjusted p-values for one-sided paired t-tests for testing the improvements over the ERM baseline.}
\label{p_values}
\vskip 0.1in
\begin{small}
\begin{center}
\begin{sc}
\begin{tabular}{lccc}
\toprule
                          &                      & {\makecell{CelebA}}                  & {\makecell{ETHEC}}                \\ \midrule
                          & \textbf{Hierarchical}     & 0.0154     & 0.7677 \\
                          & \textbf{IRM}         & 0.6117                  & 0.1290                 \\
                          & \textbf{CLOvE}       & 0.0119                  & 0.0383                 \\
                          & \textbf{VarREx}      & $<0.0001$       &           0.0383
                          \\
                          & \textbf{VarAUC}      & 0.0058        & 0.0383                     \\ \bottomrule                         
\end{tabular}
\end{sc}
\end{center}
\end{small}
\vskip -0.1in
\end{table}

\subsection{Analysis of Loss Values}
Here we present an analysis of the unpenalized loss after convergence in both real-data experiments.
We performed separate analyses on pairs of data points from the dominant type during training (majority), and those from the other type (minority). Additionally, we separated positive pairs ($y=1$) and negative pairs ($y=0$). Figure \ref{sup:loss_histograms} displays histograms illustrating the differences between losses on the training set obtained with the representation learned using ERM ($g_\text{ERM}$), and those obtained using our algorithm with VarAUC penalty ($g_\text{VarAUC}$):
$$ \text{Diff}_{ij} = \ell(x_{ij},y_{ij} ;d_{g_\text{ERM})} - \ell(x_{ij},y_{ij} ;d_{g_\text{VarAUC}}). $$
Positive values of the differences correspond to higher losses for ERM.

In both experiments, when examining negative pairs from the minority group, as shown in the top-left histograms, most of the observed differences are positive. This indicates that the ERM losses for these pairs are higher compared to the losses obtained for the representation trained with the VarAUC penalty. The disparities are smaller for the other three groups: majority negative pairs, minority negative pairs, and minority positive pairs. Among these groups, ERM performs better on positive pairs.

\begin{figure}[h!]
    \centering
    \begin{minipage}{0.45\textwidth}
        \centering
        \includegraphics[width=\textwidth]{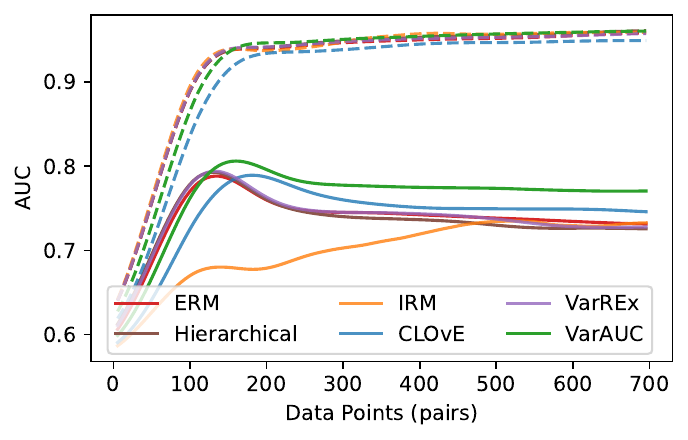}
        \label{fig:1a}
    \end{minipage}\hfill
    \begin{minipage}{0.45\textwidth}
        \centering
        \includegraphics[width=\textwidth]{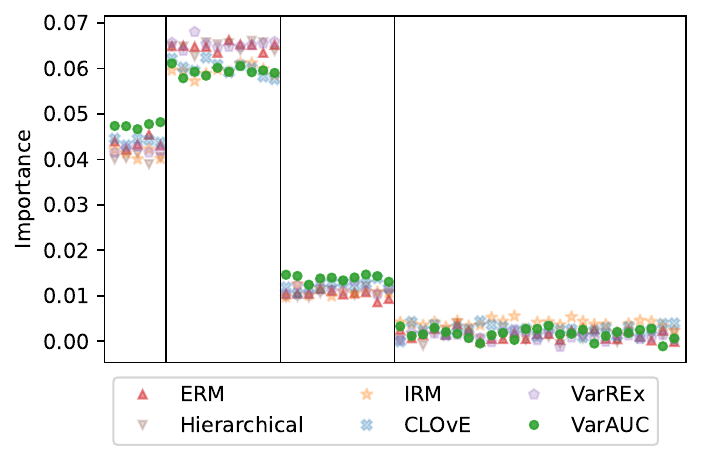}
        \label{fig:1b}
    \end{minipage}
    \vspace{0.5cm}
    \begin{minipage}{0.45\textwidth}
        \centering
        \includegraphics[width=\textwidth]{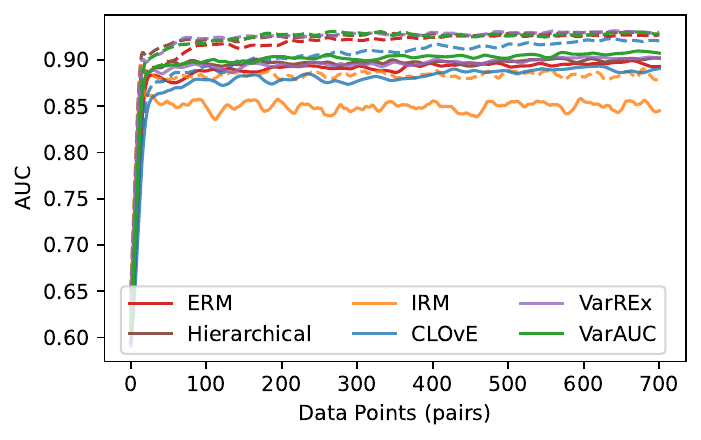}
        \label{fig:1c}
    \end{minipage}\hfill
    \begin{minipage}{0.45\textwidth}
        \centering
        \includegraphics[width=\textwidth]{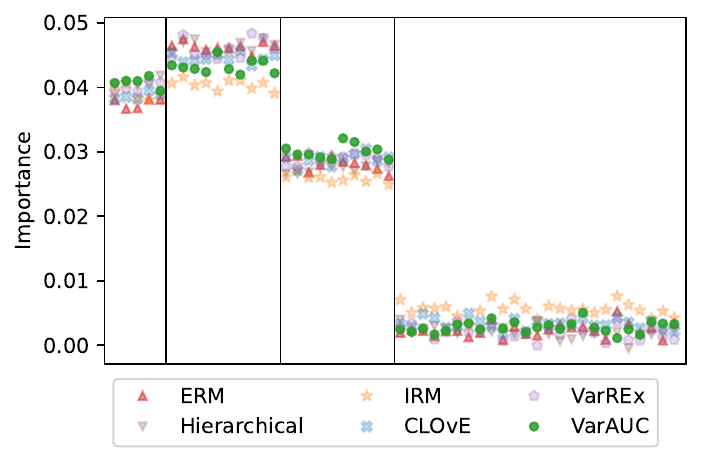}
        \label{fig:1d}
    \end{minipage}
    \caption{Additional Simulation Results. Top row: $\rho=0.05$, Bottom row: $\rho = 0.3$. Left: Average AUC progress over 10 repetitions of the simulation. Solid lines correspond to performance on test data (distribution shift scenario), dashed lines show
    performance on data sampled from the same distribution
    as training data (in-distribution scenario). Right: Average feature importance results over 10 repetitions.}
    \label{sup:simulations-agg}
\end{figure}

\begin{figure}
\vskip 0.1in
\begin{center}
\centerline{\includegraphics[width=\columnwidth]{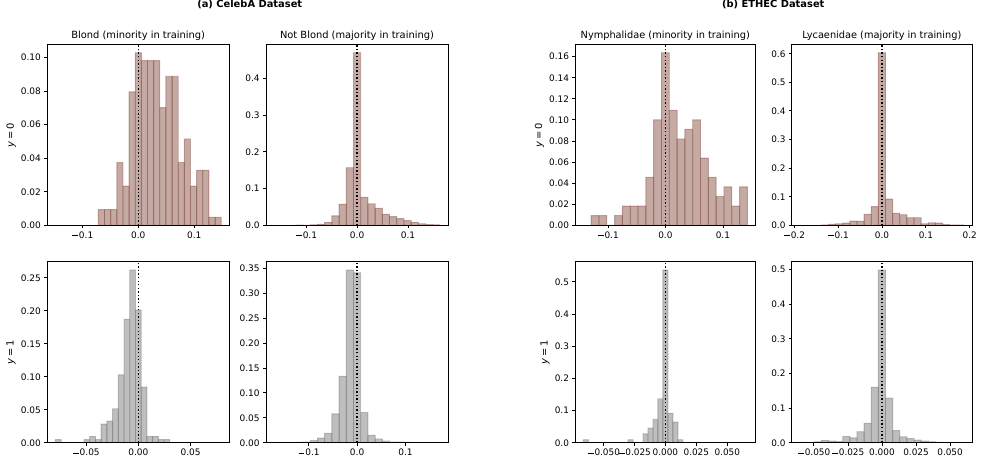}}
\caption{Analysis of Loss Differences.  Histograms of differences between ERM and our algorithm with VarAUC penalty are shown for two experiments in separate sub-figures: (a) CelebA dataset, (b) ETHEC dataset. The top rows show differences for negative pairs ($y=0$), bottom ones show differences for positive pairs ($y=1$). In each sub-figure the left column corresponds to the minority type and right one to the majority. A dotted black line marks a difference of 0. Positive values correspond to higher losses for ERM.}
\label{sup:loss_histograms}
\end{center}
\vskip -0.1in
\end{figure}

\clearpage

\section{Datasets} \label{sup:datasets}

\begin{figure}[h!]
    \centering
    \begin{minipage}{\textwidth}
        \centering
        \includegraphics[width=\textwidth]{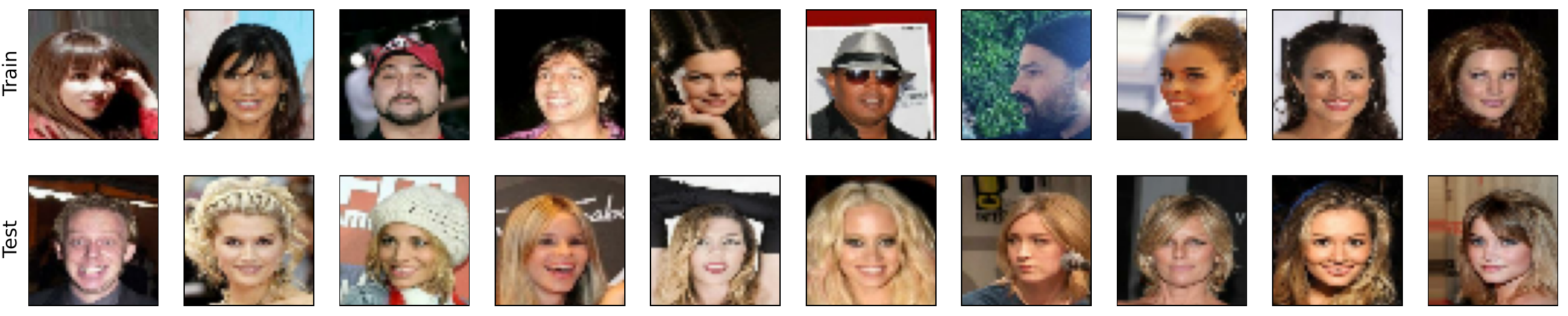}
        \caption{Sample Images from the CelebA Dataset. Top: a random sample of the training data with $95\%$ non-blond people. Bottom: a random sample of the test data with  $95\%$ blond people.}
        \label{sup:faces_samples}
    \end{minipage}   
    \begin{minipage}{\textwidth}
    \vspace{1em}
        \centering
        \includegraphics[width=\textwidth]{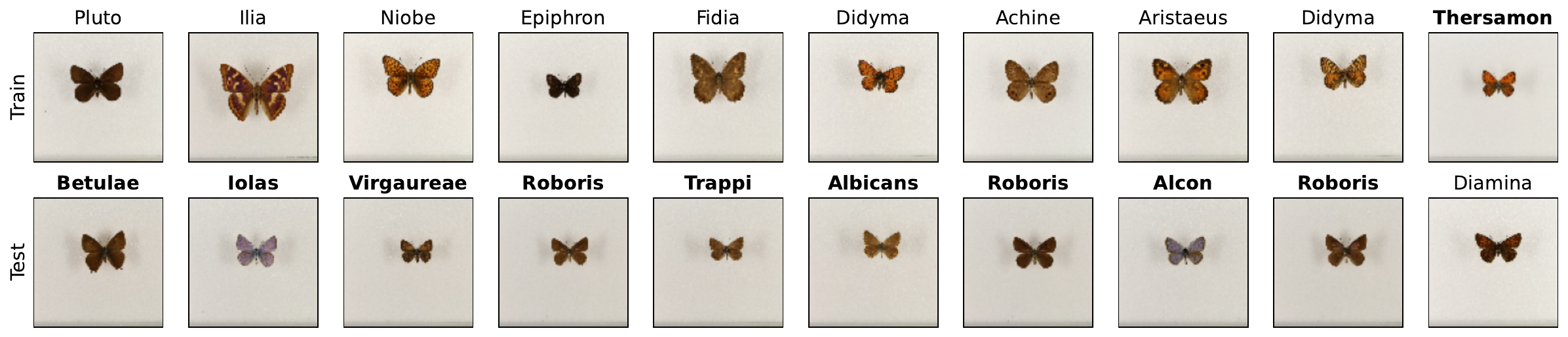}
        \caption{Sample Images from the ETHEC Dataset. Top: a sample of the the training data -- 9 species of the Lycaenidae family and 1 from the Nymphalidae family. Bottom: a sample  of the test data where the proportion of the families is reversed. Nymphalidae species names are marked in bold.}
        \label{sup:butterfly_samples}
    \end{minipage}
\end{figure}

\section{Implementation Details} \label{sup:implement}

A link to a permanent repository with code to reproduce our results is included in the main text.

The data-related parameters of our experiments are described in the main text. In all our experiments we used margin of $m=0.5$ for the contrastive loss and Adam (Kingma \& Ba, 2014) optimizer to train all models.

For the CLOvE penalty we used a Laplacian kernel $k(r, r') = e^{\frac{1}{\text{width}} - |r-r'|}$ 
with width of 0.4 as originally suggested by Kumar et al. (2018).

For optimization of the VarAUC objective we disregard the finite sample correction $\frac{N_s-n}{N_s-1}$ in the implementation since $n$ is very small compared to $N_s$. In practice, we minimize the standard deviation instead of the variance in both variance based penalties, and the hyperparameters are reported accordingly.

In our scenario where the attribute of interest is unknown, we generated a synthetic attribute for hyperparameter selection using Principle Components (PC). We ranked examples based on their first PC component values, classifying the top 10\% as positive and the rest as negative. Hyperparameters for all methods were chosen via grid-search in a single experiment repetition, ensuring robustness against this synthetic attribute. Notably, the experiments themselves did not involve the PC attribute; instead, they focused on dimension swapping in simulations and attributes like hair color or species family in CelebA and ETHEC experiments.

The grid search produced almost identical hyperparameters for all three $\rho$  values. We observed that performance  converged to the same value when employing hyperparameters derived from cross-validation for one $\rho$  value, as those selected for another. Therefore, for simplicity  we repeated simulations using the same hyperparameters, determined based on the grid search results for $\rho=0.1$  (the intermediate parameter value).
Similarly, minimal differences in optimal learning rates were observed among the methods within an experiment and therefore a shared learning rate was used for each experiment.  
To emphasize the improvement of OOD methods over the ERM baseline, we used the learning rate optimized for the ERM method.
Large differences were observed in optimal regularization factors, and therefore these parameters (as well as method-specific parameters) were not shared.
All hyper-parameters are reported in Table \ref{sup:hyper-parameters}.

All models were initialized with identical weights, and trained on identical data splits.

All the code in this work was implemented in Python 3.10. We used the TensorFlow 2.13 and TensorFlow Addons 0.21 packages. For evaluation we used the \texttt{auc} function from scikit-learn 1.2. The CelebA dataset was loaded through TensorFlow Datasets 4.9 and  pandas 1.5 was used to process the ETHEC dataset. 
Statistical tests were performed using \texttt{ttest\_rel} and 
\texttt{false\_discovery\_control} functions from scipy.stats 1.11.4.
All figures were generated using Matplotlib 3.7. 

The IRM implementation was adapted from the source code of the paper, available at \texttt{https://github.com/facebookresearch/InvariantRiskMinimization}.

We ran all experiments on a single A100 cloud GPU. For simulations, each full repetition of the experiment (comparing all methods) required on average 2.06 hours. Each repetition on the ETHEC dataset took 7.38 hours on average, and on the CelebA dataset 11.52 hours.

\begin{table}[h]
\caption{Hyper Parameters.}
\label{sup:hyper-parameters}
\vskip 0.1in
\begin{small}
\begin{sc}
\begin{center}
\begin{tabular}{llccccc}  \toprule
                           &                                & \multicolumn{1}{l}{ERM} & \multicolumn{1}{l}{IRM} & \multicolumn{1}{l}{CLOvE} & \multicolumn{1}{l}{VarREx} & \multicolumn{1}{l}{VarAUC}  \\ \midrule
\multirow{3}{*}{Simulations} & Learning rate $\eta$            & 0.01                    & 0.01                    & 0.01                      & 0.01                       & 0.01                       \\
                             & Regularization factor $\lambda$ & --                       & 0.01                    & 0.05                     & 3.0                        & 1.5                        \\
                             & Network weight regularizer      & --                       & 0.01                    & --                         & --                          & --                             \\ \midrule 
\multirow{3}{*}{CelebA}      & Learning rate $\eta$            & $10^{-5}$               & $10^{-5}$               & $10^{-5}$                 & $10^{-5}$                  & $10^{-5}$                  \\
                             & Regularization factor $\lambda$ & --                       & 0.1                     & 0.085                     & 0.01                       & 0.2                        \\
                             & Network weight regularizer      & --                       & 0.01                    & --                         & --                          & --                             \\ \midrule 
\multirow{3}{*}{ETHEC}       & Learning rate $\eta$            & $10^{-3}$               & $10^{-3}$               & $10^{-3}$                 & $10^{-3}$                  & $10^{-3}$                  \\
                             & Regularization factor $\lambda$ & --                       & 0.02                    & 0.05                      & 0.1                        & 0.2                        \\
                             & Network weight regularizer      & --                       & 0.01                    & --                         & --                          & --                              \\ \bottomrule
\end{tabular}
\end{center}
\end{sc}
\end{small}
\vskip -0.1in
\end{table}

\newpage
\section*{NeurIPS Paper Checklist}

\begin{enumerate}

\item {\bf Claims}
    \item[] Question: Do the main claims made in the abstract and introduction accurately reflect the paper's contributions and scope?
    \item[] Answer: \answerYes{}, 
    \item[] Justification: Both abstract and the introduction (last 2 paragraphs) accurately state the main contributions of the paper.
    \item[] Guidelines:
    \begin{itemize}
        \item The answer NA means that the abstract and introduction do not include the claims made in the paper.
        \item The abstract and/or introduction should clearly state the claims made, including the contributions made in the paper and important assumptions and limitations. A No or NA answer to this question will not be perceived well by the reviewers. 
        \item The claims made should match theoretical and experimental results, and reflect how much the results can be expected to generalize to other settings. 
        \item It is fine to include aspirational goals as motivation as long as it is clear that these goals are not attained by the paper. 
    \end{itemize}

\item {\bf Limitations}
    \item[] Question: Does the paper discuss the limitations of the work performed by the authors?
    \item[] Answer: \answerYes{} 
    \item[] Justification: Main limitations are discussed in \S \ref{discussion}.
    \item[] Guidelines:
    \begin{itemize}
        \item The answer NA means that the paper has no limitation while the answer No means that the paper has limitations, but those are not discussed in the paper. 
        \item The authors are encouraged to create a separate "Limitations" section in their paper.
        \item The paper should point out any strong assumptions and how robust the results are to violations of these assumptions (e.g., independence assumptions, noiseless settings, model well-specification, asymptotic approximations only holding locally). The authors should reflect on how these assumptions might be violated in practice and what the implications would be.
        \item The authors should reflect on the scope of the claims made, e.g., if the approach was only tested on a few datasets or with a few runs. In general, empirical results often depend on implicit assumptions, which should be articulated.
        \item The authors should reflect on the factors that influence the performance of the approach. For example, a facial recognition algorithm may perform poorly when image resolution is low or images are taken in low lighting. Or a speech-to-text system might not be used reliably to provide closed captions for online lectures because it fails to handle technical jargon.
        \item The authors should discuss the computational efficiency of the proposed algorithms and how they scale with dataset size.
        \item If applicable, the authors should discuss possible limitations of their approach to address problems of privacy and fairness.
        \item While the authors might fear that complete honesty about limitations might be used by reviewers as grounds for rejection, a worse outcome might be that reviewers discover limitations that aren't acknowledged in the paper. The authors should use their best judgment and recognize that individual actions in favor of transparency play an important role in developing norms that preserve the integrity of the community. Reviewers will be specifically instructed to not penalize honesty concerning limitations.
    \end{itemize}

\item {\bf Theory Assumptions and Proofs}
    \item[] Question: For each theoretical result, does the paper provide the full set of assumptions and a complete (and correct) proof?
    \item[] Answer: \answerYes{} 
    \item[] Justification: All assumptions are clearly stated and full proofs to the theoretical claims appear in Appendix \ref{sec:parametric_sup}
    \item[] Guidelines:
    \begin{itemize}
        \item The answer NA means that the paper does not include theoretical results. 
        \item All the theorems, formulas, and proofs in the paper should be numbered and cross-referenced.
        \item All assumptions should be clearly stated or referenced in the statement of any theorems.
        \item The proofs can either appear in the main paper or the supplemental material, but if they appear in the supplemental material, the authors are encouraged to provide a short proof sketch to provide intuition. 
        \item Inversely, any informal proof provided in the core of the paper should be complemented by formal proofs provided in appendix or supplemental material.
        \item Theorems and Lemmas that the proof relies upon should be properly referenced. 
    \end{itemize}

    \item {\bf Experimental Result Reproducibility}
    \item[] Question: Does the paper fully disclose all the information needed to reproduce the main experimental results of the paper to the extent that it affects the main claims and/or conclusions of the paper (regardless of whether the code and data are provided or not)?
    \item[] Answer: \answerYes{} 
    \item[] Justification: All the details are provided in Section \S \ref{results} and Appendix \ref{sup:implement}.
    \item[] Guidelines:
    \begin{itemize}
        \item The answer NA means that the paper does not include experiments.
        \item If the paper includes experiments, a No answer to this question will not be perceived well by the reviewers: Making the paper reproducible is important, regardless of whether the code and data are provided or not.
        \item If the contribution is a dataset and/or model, the authors should describe the steps taken to make their results reproducible or verifiable. 
        \item Depending on the contribution, reproducibility can be accomplished in various ways. For example, if the contribution is a novel architecture, describing the architecture fully might suffice, or if the contribution is a specific model and empirical evaluation, it may be necessary to either make it possible for others to replicate the model with the same dataset, or provide access to the model. In general. releasing code and data is often one good way to accomplish this, but reproducibility can also be provided via detailed instructions for how to replicate the results, access to a hosted model (e.g., in the case of a large language model), releasing of a model checkpoint, or other means that are appropriate to the research performed.
        \item While NeurIPS does not require releasing code, the conference does require all submissions to provide some reasonable avenue for reproducibility, which may depend on the nature of the contribution. For example
        \begin{enumerate}
            \item If the contribution is primarily a new algorithm, the paper should make it clear how to reproduce that algorithm.
            \item If the contribution is primarily a new model architecture, the paper should describe the architecture clearly and fully.
            \item If the contribution is a new model (e.g., a large language model), then there should either be a way to access this model for reproducing the results or a way to reproduce the model (e.g., with an open-source dataset or instructions for how to construct the dataset).
            \item We recognize that reproducibility may be tricky in some cases, in which case authors are welcome to describe the particular way they provide for reproducibility. In the case of closed-source models, it may be that access to the model is limited in some way (e.g., to registered users), but it should be possible for other researchers to have some path to reproducing or verifying the results.
        \end{enumerate}
    \end{itemize}

\item {\bf Open access to data and code}
    \item[] Question: Does the paper provide open access to the data and code, with sufficient instructions to faithfully reproduce the main experimental results, as described in supplemental material?
    \item[] Answer: \answerYes{} 
    \item[] Justification: The datasets are publicly available and code implementing all our results is submitted with the paper.
    \item[] Guidelines:
    \begin{itemize}
        \item The answer NA means that paper does not include experiments requiring code.
        \item Please see the NeurIPS code and data submission guidelines (\url{https://nips.cc/public/guides/CodeSubmissionPolicy}) for more details.
        \item While we encourage the release of code and data, we understand that this might not be possible, so “No” is an acceptable answer. Papers cannot be rejected simply for not including code, unless this is central to the contribution (e.g., for a new open-source benchmark).
        \item The instructions should contain the exact command and environment needed to run to reproduce the results. See the NeurIPS code and data submission guidelines (\url{https://nips.cc/public/guides/CodeSubmissionPolicy}) for more details.
        \item The authors should provide instructions on data access and preparation, including how to access the raw data, preprocessed data, intermediate data, and generated data, etc.
        \item The authors should provide scripts to reproduce all experimental results for the new proposed method and baselines. If only a subset of experiments are reproducible, they should state which ones are omitted from the script and why.
        \item At submission time, to preserve anonymity, the authors should release anonymized versions (if applicable).
        \item Providing as much information as possible in supplemental material (appended to the paper) is recommended, but including URLs to data and code is permitted.
    \end{itemize}

\item {\bf Experimental Setting/Details}
    \item[] Question: Does the paper specify all the training and test details (e.g., data splits, hyperparameters, how they were chosen, type of optimizer, etc.) necessary to understand the results?
    \item[] Answer: \answerYes{} 
    \item[] Justification: We specify all  hyperparameters and training details in Appendix \ref{sup:implement}.
    \item[] Guidelines:
    \begin{itemize}
        \item The answer NA means that the paper does not include experiments.
        \item The experimental setting should be presented in the core of the paper to a level of detail that is necessary to appreciate the results and make sense of them.
        \item The full details can be provided either with the code, in appendix, or as supplemental material.
    \end{itemize}

\item {\bf Experiment Statistical Significance}
    \item[] Question: Does the paper report error bars suitably and correctly defined or other appropriate information about the statistical significance of the experiments?
    \item[] Answer: \answerYes{} 
    \item[] Justification: We perform statistical testing to provide significance of our results and report FDR adjusted p-values. For all reported results we include either error bars in the main text, or when other visualizations are chosen we report means and standard deviations in Appendix \ref{sup:results}. 
    \item[] Guidelines:
    \begin{itemize}
        \item The answer NA means that the paper does not include experiments.
        \item The authors should answer "Yes" if the results are accompanied by error bars, confidence intervals, or statistical significance tests, at least for the experiments that support the main claims of the paper.
        \item The factors of variability that the error bars are capturing should be clearly stated (for example, train/test split, initialization, random drawing of some parameter, or overall run with given experimental conditions).
        \item The method for calculating the error bars should be explained (closed form formula, call to a library function, bootstrap, etc.)
        \item The assumptions made should be given (e.g., Normally distributed errors).
        \item It should be clear whether the error bar is the standard deviation or the standard error of the mean.
        \item It is OK to report 1-sigma error bars, but one should state it. The authors should preferably report a 2-sigma error bar than state that they have a 96\% CI, if the hypothesis of Normality of errors is not verified.
        \item For asymmetric distributions, the authors should be careful not to show in tables or figures symmetric error bars that would yield results that are out of range (e.g. negative error rates).
        \item If error bars are reported in tables or plots, The authors should explain in the text how they were calculated and reference the corresponding figures or tables in the text.
    \end{itemize}

\item {\bf Experiments Compute Resources}
    \item[] Question: For each experiment, does the paper provide sufficient information on the computer resources (type of compute workers, memory, time of execution) needed to reproduce the experiments?
    \item[] Answer: \answerYes{} 
    \item[] Justification: All resources including GPU information and run times are provided in Appendix \ref{sup:implement}.
    \item[] Guidelines: 
    \begin{itemize}
        \item The answer NA means that the paper does not include experiments.
        \item The paper should indicate the type of compute workers CPU or GPU, internal cluster, or cloud provider, including relevant memory and storage.
        \item The paper should provide the amount of compute required for each of the individual experimental runs as well as estimate the total compute. 
        \item The paper should disclose whether the full research project required more compute than the experiments reported in the paper (e.g., preliminary or failed experiments that didn't make it into the paper). 
    \end{itemize}
    
\item {\bf Code Of Ethics}
    \item[] Question: Does the research conducted in the paper conform, in every respect, with the NeurIPS Code of Ethics \url{https://neurips.cc/public/EthicsGuidelines}?
    \item[] Answer: \answerYes{} 
    \item[] Justification: The paper conforms with the provided code of ethics.
    \item[] Guidelines:
    \begin{itemize}
        \item The answer NA means that the authors have not reviewed the NeurIPS Code of Ethics.
        \item If the authors answer No, they should explain the special circumstances that require a deviation from the Code of Ethics.
        \item The authors should make sure to preserve anonymity (e.g., if there is a special consideration due to laws or regulations in their jurisdiction).
    \end{itemize}

\item {\bf Broader Impacts}
    \item[] Question: Does the paper discuss both potential positive societal impacts and negative societal impacts of the work performed?
    \item[] Answer: \answerNA{} 
    \item[] Justification: This paper presents work whose goal is to advance the field of learning robust data representations. It is not tied
    to any particular applications and therefore we do not see an immediate risk for negative societal
    impact. 
    \item[] Guidelines:
    \begin{itemize}
        \item The answer NA means that there is no societal impact of the work performed.
        \item If the authors answer NA or No, they should explain why their work has no societal impact or why the paper does not address societal impact.
        \item Examples of negative societal impacts include potential malicious or unintended uses (e.g., disinformation, generating fake profiles, surveillance), fairness considerations (e.g., deployment of technologies that could make decisions that unfairly impact specific groups), privacy considerations, and security considerations.
        \item The conference expects that many papers will be foundational research and not tied to particular applications, let alone deployments. However, if there is a direct path to any negative applications, the authors should point it out. For example, it is legitimate to point out that an improvement in the quality of generative models could be used to generate deepfakes for disinformation. On the other hand, it is not needed to point out that a generic algorithm for optimizing neural networks could enable people to train models that generate Deepfakes faster.
        \item The authors should consider possible harms that could arise when the technology is being used as intended and functioning correctly, harms that could arise when the technology is being used as intended but gives incorrect results, and harms following from (intentional or unintentional) misuse of the technology.
        \item If there are negative societal impacts, the authors could also discuss possible mitigation strategies (e.g., gated release of models, providing defenses in addition to attacks, mechanisms for monitoring misuse, mechanisms to monitor how a system learns from feedback over time, improving the efficiency and accessibility of ML).
    \end{itemize}
    
\item {\bf Safeguards}
    \item[] Question: Does the paper describe safeguards that have been put in place for responsible release of data or models that have a high risk for misuse (e.g., pretrained language models, image generators, or scraped datasets)?
    \item[] Answer: \answerNA{} 
    \item[] Justification: We do not release any new data or models.
    \item[] Guidelines:
    \begin{itemize}
        \item The answer NA means that the paper poses no such risks.
        \item Released models that have a high risk for misuse or dual-use should be released with necessary safeguards to allow for controlled use of the model, for example by requiring that users adhere to usage guidelines or restrictions to access the model or implementing safety filters. 
        \item Datasets that have been scraped from the Internet could pose safety risks. The authors should describe how they avoided releasing unsafe images.
        \item We recognize that providing effective safeguards is challenging, and many papers do not require this, but we encourage authors to take this into account and make a best faith effort.
    \end{itemize}

\item {\bf Licenses for existing assets}
    \item[] Question: Are the creators or original owners of assets (e.g., code, data, models), used in the paper, properly credited and are the license and terms of use explicitly mentioned and properly respected?
    \item[] Answer: \answerYes{} 
    \item[] Justification: The only asset used is the IRM implementation. The corresponding paper is cited and we explicitly mention this in Appendix \ref{sup:implement}, while providing also reference for the code itself.
    \item[] Guidelines:
    \begin{itemize}
        \item The answer NA means that the paper does not use existing assets.
        \item The authors should cite the original paper that produced the code package or dataset.
        \item The authors should state which version of the asset is used and, if possible, include a URL.
        \item The name of the license (e.g., CC-BY 4.0) should be included for each asset.
        \item For scraped data from a particular source (e.g., website), the copyright and terms of service of that source should be provided.
        \item If assets are released, the license, copyright information, and terms of use in the package should be provided. For popular datasets, \url{paperswithcode.com/datasets} has curated licenses for some datasets. Their licensing guide can help determine the license of a dataset.
        \item For existing datasets that are re-packaged, both the original license and the license of the derived asset (if it has changed) should be provided.
        \item If this information is not available online, the authors are encouraged to reach out to the asset's creators.
    \end{itemize}

\item {\bf New Assets}
    \item[] Question: Are new assets introduced in the paper well documented and is the documentation provided alongside the assets?
    \item[] Answer: \answerNA{} 
    \item[] Justification: We do not release new assets.
    \item[] Guidelines:
    \begin{itemize}
        \item The answer NA means that the paper does not release new assets.
        \item Researchers should communicate the details of the dataset/code/model as part of their submissions via structured templates. This includes details about training, license, limitations, etc. 
        \item The paper should discuss whether and how consent was obtained from people whose asset is used.
        \item At submission time, remember to anonymize your assets (if applicable). You can either create an anonymized URL or include an anonymized zip file.
    \end{itemize}

\item {\bf Crowdsourcing and Research with Human Subjects}
    \item[] Question: For crowdsourcing experiments and research with human subjects, does the paper include the full text of instructions given to participants and screenshots, if applicable, as well as details about compensation (if any)? 
    \item[] Answer: \answerNA{} 
    \item[] Justification: The paper does not involve human subjects and did not use crowd sourcing.
    \item[] Guidelines: 
    \begin{itemize}
        \item The answer NA means that the paper does not involve crowdsourcing nor research with human subjects.
        \item Including this information in the supplemental material is fine, but if the main contribution of the paper involves human subjects, then as much detail as possible should be included in the main paper. 
        \item According to the NeurIPS Code of Ethics, workers involved in data collection, curation, or other labor should be paid at least the minimum wage in the country of the data collector. 
    \end{itemize}

\item {\bf Institutional Review Board (IRB) Approvals or Equivalent for Research with Human Subjects}
    \item[] Question: Does the paper describe potential risks incurred by study participants, whether such risks were disclosed to the subjects, and whether Institutional Review Board (IRB) approvals (or an equivalent approval/review based on the requirements of your country or institution) were obtained?
    \item[] Answer: \answerNA{} 
    \item[] Justification: The paper does not involve human subjects and did not use crowd sourcing.
    \item[] Guidelines:
    \begin{itemize}
        \item The answer NA means that the paper does not involve crowdsourcing nor research with human subjects.
        \item Depending on the country in which research is conducted, IRB approval (or equivalent) may be required for any human subjects research. If you obtained IRB approval, you should clearly state this in the paper. 
        \item We recognize that the procedures for this may vary significantly between institutions and locations, and we expect authors to adhere to the NeurIPS Code of Ethics and the guidelines for their institution. 
        \item For initial submissions, do not include any information that would break anonymity (if applicable), such as the institution conducting the review.
    \end{itemize}

\end{enumerate}

\end{document}